
\documentclass{article}

\usepackage{microtype}
\usepackage{graphicx}
\usepackage{subfigure}
\usepackage{booktabs} 

\usepackage{hyperref}



\usepackage[accepted]{icml2018}

\usepackage[scaled=.90]{helvet}
\usepackage{courier}

\icmltitlerunning{A Reductions Approach to Fair Classification}

\usepackage{booktabs} 
\usepackage{enumitem}

\usepackage{amsmath,amssymb,amsthm}
\usepackage{eucal}
\usepackage{mathtools,xspace}
\usepackage{bbm}
\usepackage[dvipsnames]{xcolor}
\usepackage{tikz}
\usetikzlibrary{shapes.geometric}
\usepackage{comment}
\newtheorem{example}{Example}

\newtheorem{lemma}{Lemma}
\newtheorem{theorem}{Theorem}
\newtheorem{assume}{Assumption}

\usepackage[]{color-edits}
\addauthor{aa}{Fuchsia}
\addauthor{ab}{OliveGreen}
\addauthor{md}{Cerulean}
\addauthor{jcl}{brown}
\addauthor{hw}{red}

\newcommand{\card}[1]{\lvert#1\rvert}
\newcommand{\bigCard}[1]{\bigl\lvert#1\bigr\rvert}

\newcommand{\set}[1]{\{#1\}}

\newcommand{\bigSet}[1]{\bigl\{#1\bigr\}}
\newcommand{\BigSet}[1]{\Bigl\{#1\Bigr\}}
\newcommand{\braces}[1]{\{#1\}}

\newcommand{\bigBracks}[1]{\bigl[#1\bigr]}
\newcommand{\BigBracks}[1]{\Bigl[#1\Bigr]}

\newcommand{\Bracks}[1]{\left[#1\right]}
\newcommand{\parens}[1]{(#1)}
\newcommand{\Parens}[1]{\left(#1\right)}
\newcommand{\bigParens}[1]{\bigl(#1\bigr)}
\newcommand{\BigParens}[1]{\Bigl(#1\Bigr)}

\newcommand{\given}{\mathbin{\vert}}
\newcommand{\bigGiven}{\mathbin{\bigm\vert}}
\newcommand{\BigGiven}{\mathbin{\Bigm\vert}}

\newcommand{\norm}[1]{\lVert#1\rVert}

\newcommand{\abs}[1]{\lvert#1\rvert}
\newcommand{\Abs}[1]{\left\lvert#1\right\rvert}
\newcommand{\bigAbs}[1]{\bigl\lvert#1\bigr\rvert}
\newcommand{\BigAbs}[1]{\Bigl\lvert#1\Bigr\rvert}

\newcommand{\Ehat}{\widehat{\mathbb{E}}}
\newcommand{\Phat}{\widehat{\mathbb{P}}}
\renewcommand{\P}{\mathbb{P}}
\newcommand{\err}{\textup{\mdseries err}}
\newcommand{\herr}{\widehat{\err}}
\newcommand{\E}{\ensuremath{\mathbb{E}}}
\newcommand{\calH}{\ensuremath{\mathcal{H}}}
\newcommand{\calF}{\ensuremath{\mathcal{F}}}
\newcommand{\calX}{\ensuremath{\mathcal{X}}}
\newcommand{\calZ}{\ensuremath{\mathcal{Z}}}

\DeclareMathOperator*{\argmin}{arg\,min}
\DeclareMathOperator*{\argmax}{arg\,max}
\newcommand{\ind}{\mathbf{1}}
\newcommand{\eps}{\varepsilon}
\newcommand{\cons}{\ensuremath{\gamma}}
\newcommand{\consh}{\ensuremath{\widehat{\cons}}}
\newcommand{\R}{\ensuremath{\mathbb{R}}}
\newcommand{\tO}{\widetilde{O}}

\newcommand{\vlambda}{\boldsymbol{\lambda}}
\newcommand{\lambdah}{\ensuremath{\widehat{\lambda}}}
\newcommand{\vlambdah}{\boldsymbol{\lambdah}}
\newcommand{\tvlambda}{\boldsymbol{\lambda}^\dagger}
\newcommand{\tQ}{Q^\dagger}

\definecolor{purple}{rgb}{0.6, 0.4, 0.8}
\definecolor{red}{rgb}{1.0, 0.03, 0.0}
\definecolor{orange}{rgb}{0.93, 0.57, 0.13}
\definecolor{yellow}{rgb}{0.94, 0.88, 0.19}
\definecolor{green}{rgb}{0.0, 0.55, 0.55}
\definecolor{blue}{rgb}{0.0, 0.75, 1.0}

\newcommand{\dep}{\textup{DP}\xspace}
\newcommand{\eo}{\textup{EO}\xspace}

\newcommand{\moment}{\ensuremath{\mu}}
\newcommand{\momentVec}{\boldsymbol{\moment}}
\newcommand{\momenth}{\ensuremath{\widehat{\moment}}}
\newcommand{\momentVech}{\boldsymbol{\momenth}}
\newcommand{\event}{\ensuremath{\mathcal{E}}}
\newcommand{\consM}{\ensuremath{\mathbf{M}}}
\newcommand{\consC}{\ensuremath{c}}
\newcommand{\consCvec}{\ensuremath{\mathbf{\consC}}}
\newcommand{\consCh}{\ensuremath{\widehat{\consC}}}
\newcommand{\consCvech}{\ensuremath{\widehat{\consCvec}}}

\newcommand{\Attr}{\ensuremath{\mathcal{A}}}
\newcommand{\vzero}{\mathbf{0}}
\newcommand{\hmu}{\widehat{\mu}}
\newcommand{\hvmu}{\widehat{\boldsymbol{\mu}}}
\newcommand{\vmu}{\boldsymbol{\mu}}
\newcommand{\hvgamma}{\widehat{\boldsymbol{\gamma}}}
\newcommand{\vgamma}{\boldsymbol{\gamma}}
\newcommand{\hvlambda}{\widehat{\boldsymbol{\lambda}}}
\newcommand{\hvc}{\widehat{\mathbf{c}}}
\newcommand{\ve}{\mathbf{e}}
\newcommand{\hc}{\widehat{c}}
\newcommand{\hQ}{\widehat{Q}}
\newcommand{\vtheta}{\boldsymbol{\theta}}
\newcommand{\hgamma}{\widehat{\gamma}}

\newcommand{\vr}{\mathbf{r}}
\newcommand{\trans}{^{\!\top}}

\newcommand{\eqn}[1]{equation~\eqref{eqn:#1}}
\newcommand{\eqns}[2]{equations~\eqref{eqn:#1} and~\eqref{eqn:#2}}
\newcommand{\Eqn}[1]{Equation~\eqref{eqn:#1}}
\newcommand{\Eqns}[2]{Equations~\eqref{eqn:#1} and~\eqref{eqn:#2}}
\newcommand{\Alg}[1]{Algorithm~\ref{alg:#1}}
\newcommand{\App}[1]{Appendix~\ref{app:#1}}
\newcommand{\Assume}[1]{Assumption~\ref{assume:#1}}
\newcommand{\Thm}[1]{Theorem~\ref{thm:#1}}
\newcommand{\Lemma}[1]{Lemma~\ref{lemma:#1}}
\newcommand{\Fig}[1]{Figure~\ref{fig:#1}}

\newcommand{\BestLambda}{\textsc{\mdseries Best}_{\vlambda}}
\newcommand{\BestH}{\textsc{\mdseries Best}_h}

\newcommand{\midx}{\ensuremath{j}}
\newcommand{\cidx}{\ensuremath{k}}
\newcommand{\mIdx}{\ensuremath{\mathcal{J}}}
\newcommand{\cIdx}{\ensuremath{\mathcal{K}}}

\newcommand{\Qh}{\ensuremath{\widehat{Q}}}

\newtheorem{definition}{Definition}

\newcommand\hide[1]{}

\setcounter{topnumber}{2}
\setcounter{bottomnumber}{2}
\setcounter{totalnumber}{4}     
\setcounter{dbltopnumber}{2}    

\begin{document}

\twocolumn[
\icmltitle{A Reductions Approach to Fair Classification}



\icmlsetsymbol{equal}{*}

\begin{icmlauthorlist}
\icmlauthor{Alekh Agarwal}{msr}
\icmlauthor{Alina Beygelzimer}{yah}
\icmlauthor{Miroslav Dud\'ik}{msr}
\icmlauthor{John Langford}{msr}
\icmlauthor{Hanna Wallach}{msr}
\end{icmlauthorlist}

\icmlaffiliation{msr}{Microsoft Research, New York}
\icmlaffiliation{yah}{Yahoo!\ Research, New York}

\icmlcorrespondingauthor{A.~Agarwal}{alekha@microsoft.com}
\icmlcorrespondingauthor{A.~Beygelzimer}{beygel@gmail.com}
\icmlcorrespondingauthor{M.~Dud\'ik}{mdudik@microsoft.com}
\icmlcorrespondingauthor{J.~Langford}{jcl@microsoft.com}
\icmlcorrespondingauthor{H.~Wallach}{wallach@microsoft.com}


\vskip 0.3in
]



\printAffiliationsAndNotice{}  

\begin{abstract}
  We present a systematic approach for achieving fairness
  in a binary classification setting. While we focus on two well-known
  quantitative definitions of fairness, our approach
  encompasses many other previously studied definitions as
  special cases. The key idea is to reduce fair classification to a
  sequence of cost-sensitive classification problems, whose
  solutions yield a randomized classifier with the lowest (empirical)
  error subject to the desired constraints. We introduce two reductions that
  work for any representation of the cost-sensitive
  classifier and compare favorably to prior baselines on
  a variety of data sets, while overcoming several of their disadvantages.\looseness=-1
\end{abstract}

\section{Introduction}
\label{sec:introduction}

Over the past few years, the media have paid considerable attention to
machine learning systems and their ability to inadvertently
discriminate against minorities, historically disadvantaged
populations, and other protected groups when allocating resources (e.g., loans) or opportunities (e.g., jobs). In response to this
scrutiny---and driven by ongoing debates and collaborations with
lawyers, policy-makers, social scientists, and
others~\citep[e.g.,][]{barocas16big}---machine learning researchers
have begun to turn their attention to the topic of
``fairness in machine learning,'' and, in particular, to the design
of fair classification and regression algorithms.

In this paper we study the task of binary classification subject to
fairness constraints with respect to a pre-defined protected
attribute, such as race or sex. Previous work in this area can be
divided into two broad groups of approaches.

The first group of approaches
incorporate specific quantitative
definitions of fairness into existing machine learning methods, often by relaxing the
desired definitions of fairness, and only enforcing weaker constraints,
such as lack of correlation~\citep[e.g.,][]{woodworth17,zafar2017fairness,johnson2016impartial,
  kamishima2011, DoniniEtAl18}. The resulting fairness guarantees typically
only hold under strong distributional assumptions, and the approaches
are tied to specific families of classifiers, such as SVMs.

The second group of approaches eliminate the restriction to specific classifier families
and treat the underlying classification method as a ``black box,''
while implementing a wrapper that either works by pre-processing the data
or post-processing the classifier's
predictions~\citep[e.g.,][]{kamiran12,feldman15certifying,hardt16,calmon17optimized}. Existing
pre-processing approaches are specific to particular definitions of
fairness and typically seek to come up with a single transformed data set that
will work across all learning algorithms, which,
in practice, leads to classifiers that still exhibit substantial unfairness
(see our evaluation in Section~\ref{sec:experiments}). In contrast, post-processing
allows a wider range of fairness definitions and results in provable
fairness guarantees. However, it is not guaranteed to find the most accurate fair classifier,
and requires test-time access to the protected attribute, which might
not be available.


We present a general-purpose approach
that has the key advantage of this second group of approaches---i.e.,
the underlying classification method is treated as a black box---but without the
noted disadvantages. Our approach encompasses a wide range of fairness
definitions, is guaranteed to yield the most accurate
fair classifier, and does not require test-time access to the
protected attribute. Specifically, our approach allows any definition of fairness that can be formalized
via linear inequalities on conditional moments, such as
\emph{demographic parity} or \emph{equalized odds}~(see
Section~\ref{sec:fairness}). We show how binary classification subject to these constraints can be reduced
to a sequence of cost-sensitive classification problems.
We require only black-box access to a cost-sensitive classification algorithm,
which does not need to have any
knowledge of the desired definition of fairness or protected attribute.
We show that the solutions to our sequence of
cost-sensitive classification problems yield a randomized classifier
with the lowest (empirical) error subject to the desired fairness
constraints.\looseness=-1

\citet{CorbettDaviesEtAl17} and \citet{menon18cost}
begin with a similar goal to ours, but they analyze
the Bayes optimal classifier under fairness constraints
in the limit of infinite data. In contrast, our focus is algorithmic,
our approach applies to any classifier family, and we obtain
finite-sample guarantees. \citet{Dwork17partition} also begin with a
similar goal to ours. Their approach partitions
the training examples into subsets according to protected attribute values
and then leverages transfer learning
to jointly learn from these separate data sets. Our approach avoids partitioning the data and assumes access only to a classification algorithm rather than a transfer learning algorithm.

A preliminary version of this paper appeared at the FAT/ML
workshop~\citep{AgarwalEtAl17}, and led to extensions with more
general optimization objectives~\citep{AlabiImKa18} and combinatorial
protected attributes~\citep{KearnsEtAl18}.


In the next section, we formalize our problem. While we focus
on two well-known quantitative definitions of
fairness, our approach also encompasses many
other previously studied definitions of fairness as special cases. In
Section~\ref{sec:reductions}, we describe our reductions approach to
fair classification and its guarantees in detail.
The experimental study in Section~\ref{sec:experiments} shows that our
reductions compare favorably to three baselines, while overcoming
some of their disadvantages and also offering the flexibility
of picking a suitable accuracy--fairness tradeoff.
Our results demonstrate the utility of having a general-purpose
approach for combining machine learning methods and quantitative fairness definitions.\looseness=-1

\section{Problem Formulation}
\label{sec:problem_formulation}

We consider a binary classification setting where the training
examples consist of triples $(X, A, Y)$, where $X\in\calX$ is a
feature vector, $A \in \Attr$ is a protected attribute, and $Y \in
\{0,1\}$ is a label. The feature vector $X$ can either contain the
protected attribute $A$ as one of the features or contain other
features that are arbitrarily indicative of $A$. For example, if the
classification task is to predict whether or not someone will default
on a loan, each training example might correspond to a person, where
$X$ represents their demographics, income level, past payment
history, and loan amount; $A$ represents their race; and $Y$
represents whether or not they defaulted on that loan. Note that $X$
might contain their race as one of the features or, for example,
contain their zipcode---a feature that is often correlated with
race. Our goal is to learn an accurate classifier $h:\calX\to \{0,1\}$
from some set (i.e., family) of classifiers $\calH$, such as
linear threshold rules, decision trees, or neural nets, while satisfying
some definition of fairness. Note that the classifiers in $\calH$ do
not explicitly depend on $A$.\looseness=-1

\subsection{Fairness Definitions}
\label{sec:fairness}

We focus on two well-known quantitative definitions of fairness that
have been considered in previous work on fair classification; however,
our approach also encompasses many other previously studied
definitions of fairness as special cases, as we explain at the end of
this section.

The first definition---\emph{demographic} (or statistical)
\emph{parity}---can be thought of as a stronger version of the US
Equal Employment Opportunity Commission's ``four-fifths rule,'' which
requires that the ``selection rate for any race, sex, or ethnic group
[must be at least] four-fifths (4/5) (or eighty percent) of the rate
for the group with the highest rate.''\footnote{See the Uniform
  Guidelines on Employment Selection Procedures, 29 C.F.R. \S
  1607.4(D) (2015).}

\begin{definition}[Demographic parity---\dep]
\label{defn:dp}
  A classifier $h$ satisfies demographic parity under a distribution
  over $(X, A, Y)$ if its prediction $h(X)$ is statistically
  independent of the protected attribute $A$---that is, if
  $\P[{h(X)=\hat{y}} \given {A=a}] = \P[h(X)=\hat{y}]$ for all $a$,
  $\hat{y}$.  Because $\hat{y}\in\set{0,1}$, this is equivalent to
  ${\E[h(X)\given A=a]}=\E[h(X)]$ for all $a$.
  \end{definition}

The second definition---\emph{equalized odds}---was recently proposed
by \citet{hardt16} to remedy two previously noted flaws with
demographic parity~\citep{dwork12}. First, demographic parity permits
a classifier which accurately classifies data points with one
value ${A=a}$,
such as the value $a$ with the most data, but makes random predictions
for data points with ${A\ne a}$ as long as the probabilities of
${h(X)=1}$ match. Second, demographic parity rules out perfect
classifiers whenever $Y$ is correlated with $A$. In contrast,
equalized odds suffers from neither of these flaws.\looseness=-1

\begin{definition}[Equalized odds---\eo]
\label{defn:eo}
  A classifier $h$ satisfies equalized odds under a distribution over
  $(X, A, Y)$ if its prediction $h(X)$ is conditionally independent of
  the protected attribute $A$ given the label $Y$---that is, if
  $\P[{h(X)=\hat{y}} \given {A=a}, {Y=y}] = \P[{h(X)=\hat{y}} \given
    {Y=y}]$ for all $a$, $y$, and $\hat{y}$.  Because
  $\hat{y}\in\set{0,1}$, this is equivalent to $\E[h(X)\given
    A=a,Y=y]=\E[h(X)\given Y=y]$ for all $a$, $y$.
\end{definition}


We now show how each definition can be viewed as a special case of a
general set of linear constraints of the form
\begin{equation}
  \label{eqn:general_form}
    \consM \momentVec(h) \leq \consCvec,
      \end{equation}
where matrix $\consM \in \R^{|\mathcal{K}| \times |\mathcal{J}|}$ and
vector $\consCvec\in\R^{|\mathcal{K}|}$ describe the linear
constraints, each indexed by $\cidx\in\cIdx$, and $\momentVec(h) \in
\R^{|\mathcal{J}|}$ is a vector of conditional moments of the form
\[
\mu_j(h) = \E\bigBracks{\,g_j(X, A, Y, h(X))\, \bigGiven \,\event_j\,}
\quad
  \text{for $j \in \mathcal{J}$},
\]
where
$g_j : {\calX\times\Attr\times\set{0,1}\times\set{0,1}} \to [0,1]$
and $\event_j$ is an event defined with
respect to $(X, A, Y)$. Crucially, $g_j$ depends on $h$, while
$\event_j$ cannot depend on $h$ in any way.

\begin{example}[\dep]  In a binary classification setting, demographic
  parity can be expressed as a set of $|\mathcal{A}|$ equality
  constraints, each of the form $\E[h(X) \given A = a] =
  \E[h(X)]$. Letting $\mathcal{J} = \mathcal{A} \cup \{ \star \}$,
  $g_j(X, A, Y, h(X)) = h(X)$ for all $j$, $\event_{a} = \{A = a\}$,
  and $\event_{\star} = \{\textit{True}\}$, where $\{\textit{True}\}$
  refers to the event encompassing all points in the sample space,
  each equality constraint can be expressed as $\mu_a(h) =
  \mu_{\star}(h)$.\footnote{Note that $\mu_\star(h) = \E[h(X) \,|\, \textit{True}] = \E[h(X)]$.} Finally, because each such constraint can be
  equivalently expressed as a pair of inequality constraints of the
  form
\begin{align*}
  \mu_a(h) - \mu_{\star}(h) &\leq 0\\
  -\mu_a(h) + \mu_{\star}(h) &\leq 0,
\end{align*}
demographic parity can be expressed as \eqn{general_form}, where
$\mathcal{K} = \mathcal{A} \times \{+,-\}$, $M_{(a,+),a'} =
\ind\set{a' = a}$, $M_{(a,+),\star}=-1$, $M_{(a,-),a'} = -\ind\set{a'
  = a}$, $M_{(a,-),\star} = 1$, and $\consCvec =
\boldsymbol{0}$. Expressing each equality constraint as a pair of
inequality constraints allows us to control the extent to which each
constraint is enforced by positing $c_k>0$ for some (or all)
$k$.\looseness=-1
\end{example}

\begin{example}[\eo] In a binary classification setting, equalized
  odds can be expressed as a set of $2\,\card{\mathcal{A}}$ equality
  constraints, each of the form $\E[h(X) \given {A=a}, {Y\!=y}] =
  \E[h(X)\given {Y\!=y}]$. Letting $\mathcal{J} =
  (\mathcal{A}\cup\set{\star})\times\set{0,1}$, $g_j(X, A, Y, h(X)) =
  h(X)$ for all $j$, $\event_{(a,y)} = \set{A=a, {Y\!=y}}$, and
  $\event_{(\star, y)} = \set{Y\!=y}$, each equality constraint can be equivalently
  expressed as 
\looseness=-1
\begin{align*}
  \mu_{(a,y)}(h) - \mu_{(\star,y)}(h) &\leq 0\\
  -\mu_{(a,y)}(h) + \mu_{(\star,y)}(h) &\leq 0.
\end{align*}
As a result, equalized odds can be expressed as \eqn{general_form},
where $\mathcal{K} = {\mathcal{A} \times \mathcal{Y} \times \{+,-\}}$,
$M_{(a,y,+),(a',y')} = \ind\set{a'\!\!=\!a,\,y'\!\!=\!y}$,
$M_{(a,y,+),(\star,y')}=-1$, $M_{(a,y,-),(a',y')} =
-\ind\set{a'\!\!=\!a,\,y'\!\!=\!y}$, $M_{(a,y,-),(\star,y')} = 1$, and
$\consCvec= \vzero$. Again, we can posit $c_k>0$ for some (or all) $k$ to allow small violations of some (or all) of the constraints.
\end{example}

Although we omit the details, we note that many other previously
studied definitions of fairness can also be expressed as
\eqn{general_form}. For example, \emph{equality of
  opportunity}~\cite{hardt16} (also known as \emph{balance for the
  positive class}; \citealp{kleinberg2017inherent}), \emph{balance for
  the negative class}~\cite{kleinberg2017inherent}, \emph{error-rate
  balance}~\cite{chouldechova}, \emph{overall accuracy
  equality}~\cite{berk}, and \emph{treatment equality}~\cite{berk} can
all be expressed as \eqn{general_form}; in contrast,
\emph{calibration}~\cite{kleinberg2017inherent} and \emph{predictive
  parity}~\cite{chouldechova} cannot because to do so would require
the event $\event_j$ to depend on $h$. We note that our approach can
also be used to satisfy multiple definitions of fairness, though if
these definitions are mutually contradictory, e.g., as described
by~\citet{kleinberg2017inherent}, then our guarantees become vacuous.\looseness=-1

\subsection{Fair Classification}

In a standard (binary) classification setting, the goal is to learn
the classifier $h\in\calH$ with the minimum classification error:
$\err(h)\coloneqq \P[h(X) \neq Y]$.
However, because our goal is to
learn the most accurate classifier while satisfying fairness
constraints, as formalized above, we instead seek to find the solution
to the constrained optimization problem\footnote{%
We consider misclassification error for concreteness, but all the
results in this paper apply to any error of the
form $\err(h) = \E[g_\err(X, A, Y, h(X))]$, where
$g_\err(\cdot,\cdot,\cdot,\cdot)\in[0,1]$.}
%
\begin{equation}
  \min_{h \in \mathcal{H}} \textrm{err}(h)
  \quad
  \text{subject to}
  \quad
  \consM \momentVec(h) \leq \consCvec.
\end{equation}
Furthermore, rather than just considering classifiers in the set
$\mathcal{H}$, we can enlarge the space of possible classifiers by
considering \emph{randomized classifiers} that can be obtained via a
distribution over $\mathcal{H}$. By considering randomized
classifiers, we can achieve better accuracy--fairness tradeoffs than
would otherwise be possible. A randomized classifier $Q$ makes a
prediction by first sampling a classifier $h \in \mathcal{H}$ from $Q$
and then using $h$ to make the prediction. The resulting
classification error is $\err(Q)=\sum_{h \in \calH}
Q(h)\,\err(h)$ and the conditional moments are $\vmu(Q) =
\sum_{h \in \calH} Q(h)\vmu(h)$ (see \App{randomized} for the
derivation). Thus we seek to solve
\begin{equation}
\label{eqn:objective}
  \min_{Q \in \Delta} \err(Q)
\quad
  \text{subject to}
\quad
  \consM \momentVec(Q) \leq \consCvec,
\end{equation}
where $\Delta$ is the set of all distributions over $\mathcal{H}$.

In practice, we do not know the true distribution over $(X, A, Y)$ and
only have access to a data set of training examples $\{(X_i, A_i,
Y_i)\}_{i=1}^n$. We therefore replace $\textrm{err}(Q)$ and $\vmu(Q)$
in \eqn{objective} with their empirical versions $\herr(Q)$ and
$\hvmu(Q)$. Because of the sampling error in $\hvmu(Q)$, we also allow
errors in satisfying the constraints by setting $\widehat{c}_k = c_k +
\eps_k$ for all $k$, where $\eps_k \geq 0$. After these modifications,
we need to solve the empirical version of \eqn{objective}:
\begin{equation}
\label{eqn:objective_empirical}
  \min_{Q \in \Delta}\herr(Q)
\quad
  \text{subject to}
\quad
  \consM \hvmu(Q) \leq \hvc
.
\end{equation}

\section{Reductions Approach}
\label{sec:reductions}

We now show how the problem~\eqref{eqn:objective_empirical} can be
reduced to a sequence of \emph{cost-sensitive classification}
problems. We further show that the solutions to our sequence of
cost-sensitive classification problems yield a randomized classifier
with the lowest (empirical) error subject to the desired constraints.

\subsection{Cost-sensitive Classification}

We assume access to a cost-sensitive classification algorithm for the
set $\calH$. The input to such an algorithm is a data set of training
examples $\{(X_i, C_i^0, C_i^1)\}_{i=1}^n$, where $C_i^0$ and $C_i^1$
denote the losses---\emph{costs} in this setting---for predicting the
labels $0$ or $1$, respectively, for $X_i$. The algorithm outputs
\begin{equation}
  \argmin_{h \in \calH} \sum_{i=1}^n h(X_i)\,C_i^1 +
  (1-h(X_i))\,C_i^0.
\label{eqn:cs-defn}
\end{equation}
This abstraction allows us to specify different costs for different
training examples, which is essential for incorporating fairness
constraints. Moreover, efficient cost-sensitive classification
algorithms are readily available for several common classifier
representations~\citep[e.g.,][]{WAP, SECOC, adacost}. In particular,
\eqn{cs-defn} is equivalent to a \emph{weighted classification}
problem, where the input consists of labeled examples $\set{(X_i, Y_i,
  W_i)}_{i=1}^n$ with $Y_i\in\set{0,1}$ and $W_i\ge 0$, and the goal
is to minimize the weighted classification error $\sum_{i=1}^n
W_i\,\ind\braces{h(X_i)\ne Y_i}$. This is equivalent to \eqn{cs-defn}
if we set $W_i=\abs{C_i^0-C_i^1}$ and $Y_i=\ind\braces{C_i^0\ge
  C_i^1}$.

\subsection{Reduction}
\label{sec:reduction}

To derive our fair classification algorithm, we rewrite
\eqn{objective_empirical} as a saddle point problem. We begin by
introducing a Lagrange multiplier $\lambda_\cidx\ge 0$ for each of the
$|\cIdx|$ constraints, summarized as $\vlambda\in\R_+^{|\cIdx|}$, and
form the Lagrangian
\begin{align}
\notag
  L(Q,\vlambda)
&=
  \herr(Q)
  +\vlambda\trans\bigParens{\consM\hvmu(Q)-\hvc}.
\end{align}
Thus, \eqn{objective_empirical} is equivalent to
\begin{equation}
\label{eqn:objective_empirical_L}
  \adjustlimits\min_{Q\in\Delta}
  \max_{\;\;\;\vlambda\in\R_+^{|\cIdx|}\;\;\;}
  L(Q,\vlambda).
\end{equation}
For computational and statistical reasons, we impose an additional
constraint on the $\ell_1$ norm of $\vlambda$ and seek to
simultaneously find the solution to the constrained version of
\eqref{eqn:objective_empirical_L} as well as its dual, obtained by
switching min and max:
\begin{align}
\label{primal}
\tag{\textup{P}}
  \adjustlimits
  \min_{Q\in\Delta\strut}
  \max_{\;\;\;\vlambda\in\R_+^{|\cIdx|},\,\norm{\vlambda}_1\le B\;\;\;}
  L(Q,\vlambda),\\
\label{dual}
\tag{\textup{D}}
  \adjustlimits\max_{\vlambda\in\R_+^{|\cIdx|},\,\norm{\vlambda}_1\le B}
  \min_{\;\;\;Q\in\Delta\;\;\;\strut}
  L(Q,\vlambda).
\end{align}
Because $L$ is linear in $Q$ and $\vlambda$ and the domains of
$Q$ and $\vlambda$ are convex and compact, both problems have
solutions (which we denote by $Q^\dagger$ and $\tvlambda$) and the
minimum value of \eqref{primal} and the maximum value of \eqref{dual}
are equal and coincide with $L(\tQ,\tvlambda)$. Thus,
$(\tQ,\tvlambda)$ is the saddle point of~$L$ \citep[Corollary 37.6.2
  and Lemma 36.2 of][]{Rockafellar70}.\looseness=-1

We find the saddle point by using the standard scheme of
\citet{FreundSc96}, developed for the equivalent problem of solving
for an equilibrium in a zero-sum game.  From game-theoretic
perspective, the saddle point can be viewed as an equilibrium of a
game between two players: the $Q$-player choosing $Q$ and the
$\vlambda$-player choosing $\vlambda$. The Lagrangian $L(Q,\vlambda)$
specifies how much the $Q$-player has to pay to the $\vlambda$-player
after they make their choices. At the saddle point, neither player
wants to deviate from their choice.\looseness=-1

Our algorithm finds an approximate equilibrium in which neither player
can gain more than $\nu$ by changing their choice (where $\nu>0$ is an
input to the algorithm).  Such an approximate equilibrium corresponds
to a \emph{$\nu$-approximate saddle point} of the Lagrangian, which is
a pair $(\hQ,\hvlambda)$, where
\begin{align*}
  L(\hQ,\hvlambda)
  &\le L(Q,\hvlambda)+\nu
&&
  \text{for all $Q\in\Delta$},\\
  L(\hQ,\hvlambda)
  &\ge L(\hQ,\vlambda)-\nu
&&
  \text{for all $\vlambda\in\R_+^{|\cIdx|}$, $\norm{\vlambda}_1\le B$}.
\end{align*}
We proceed iteratively by running a no-regret algorithm for the
$\vlambda$-player, while executing the best response of the
$Q$-player. Following \citet{FreundSc96}, the average play of both
players converges to the saddle point.  We run the exponentiated
gradient algorithm~\citep{KivinenWa97} for the $\vlambda$-player and
terminate as soon as the suboptimality of the average play falls below
the pre-specified accuracy $\nu$. The best response of the $Q$-player
can always be chosen to put all of the mass on one of the candidate
classifiers $h\in\calH$, and can be implemented by a single call to a
cost-sensitive classification algorithm for the set $\calH$.

Algorithm~\ref{alg:EG:general} fully implements this scheme, except
for the functions $\BestLambda$ and $\BestH$, which correspond to the
best-response algorithms of the two players. (We need the best
response of the $\vlambda$-player to evaluate whether the
suboptimality of the current average play has fallen below $\nu$.) The
two best response functions can be calculated as follows.

\begin{algorithm}[t]
\caption{Exp.\ gradient reduction for fair classification}
\label{alg:EG:general}
  \begin{algorithmic}
    \STATE{Input:~~training examples $\braces{(X_i,Y_i,A_i)}_{i=1}^n$\\
    ~\hphantom{Input:}~fairness constraints specified by $g_\midx$, $\event_\midx$, $\consM$, $\hvc$\\
    ~\hphantom{Input:}~bound $B$, accuracy $\nu$, learning rate $\eta$}
    \STATE{Set $\vtheta_1=\vzero\in\R^{\card{\cIdx}}$}
    \FOR{$t=1, 2, \ldots$}
    \STATE{Set $\lambda_{t,\cidx} = B\,
                \frac{\exp\braces{\theta_\cidx}}{1+\sum_{\cidx'\in\cIdx} \exp\braces{\theta_{\cidx'}}}$
           for all $\cidx\in\cIdx$}
    \STATE{$h_t\gets\BestH(\vlambda_t)$}
    \STATE{$\hQ_t\gets\frac{1}{t}\sum_{t'=1}^t h_{t'},
      \quad
            \overline{L}\gets L\Parens{\hQ_t,\BestLambda(\hQ_t)}$}
    \STATE{$\hvlambda_t\gets\frac{1}{t}\sum_{t'=1}^t \vlambda_{t'},
           \quad
            \underline{L}\gets L\Parens{\BestH(\hvlambda_t),\hvlambda_t}$}
    \STATE{$\nu_t\gets\max\BigSet{L(\hQ_t,\hvlambda_t)-\underline{L},\quad
                                    \overline{L}-L(\hQ_t,\hvlambda_t)
                                  }$}
    \IF{$\nu_t\le\nu$}
          \STATE{Return $(\hQ_t,\hvlambda_t)$}
    \ENDIF
    \STATE{Set $\vtheta_{t+1}=\vtheta_t+\eta\left({\consM\hvmu(h_t)-\hvc}\right)$}
    \ENDFOR
\end{algorithmic}
\end{algorithm}

\paragraph{$\BestLambda(Q)$: the best response of the $\vlambda$-player.} The best
response of the $\vlambda$-player for a given $Q$ is any maximizer of
$L(Q,\vlambda)$ over all valid $\vlambda$s. In our setting, it can
always be chosen to be either $\vzero$ or put all of the mass on the most
violated constraint. Letting $\hvgamma(Q)\coloneqq\consM\hvmu(Q)$ and
letting $\ve_k$ denote the $k^\textrm{th}$ vector of the standard
basis, $\BestLambda(Q)$ returns\looseness=-1
\[
   \begin{cases}
   \vzero
      &\text{if $\hvgamma(Q)\le\hvc$},
   \\
   B\ve_{\cidx^*}
      &\text{otherwise, where $\cidx^*=\argmax_\cidx[\hgamma_\cidx(Q)-\hc_\cidx]$}.
   \end{cases}
\]

\paragraph{$\BestH(\vlambda)$: the best response of the $Q$-player.} Here, the best response minimizes $L(Q,\vlambda)$
over all $Q$s in the simplex. Because $L$ is linear in $Q$, the
minimizer can always be chosen to put all of the mass on a single
classifier $h$. We show how to obtain the classifier constituting the
best response via a reduction to cost-sensitive
classification. Letting $p_j\coloneqq\Phat[\event_j]$ be the empirical
event probabilities, the Lagrangian for $Q$ which puts all of the mass
on a single $h$ is then\looseness=-1
\begin{align*}
&
  L(h,\vlambda)
=
  \herr(h)
  +\vlambda\trans\bigParens{\consM\hvmu(h)-\hvc}
\\[3pt]
&\;{}
=
  \Ehat\bigBracks{\ind\set{h(X)\ne Y}}
  -\vlambda\trans\hvc
  +\!\sum_{k,j}
   M_{k,j}\lambda_k\hmu_j(h)
\\
&\;{}=
  -\vlambda\trans\hvc
  +\Ehat\bigBracks{\ind\set{h(X)\ne Y}}
\\[6pt]
&\;\quad{}
  +\!\sum_{k,j}
   \frac{M_{k,j}\lambda_k}{p_j}\Ehat\BigBracks{g_j\bigParens{X\!,\!A,\!Y\!,\!h(X)}\,\ind\set{(X\!,\!A,\!Y)\in\event_j}}
.
\end{align*}
%
%
Assuming a data set of training examples
$\set{(X_i,A_i,Y_i)}_{i=1}^n$, the minimization of $L(h,\vlambda)$
over $h$ then corresponds to cost-sensitive classification on
$\set{(X_i,C_i^0,C_i^1)}_{i=1}^n$ with costs%
\footnote{%
For general error, $\err(h) = \E[g_\err(X,A,Y,h(X))]$, the costs
$C_i^0$ and $C_i^1$ contain, respectively, the terms
$g_\err(X_i,A_i,Y_i,0)$ and $g_\err(X_i,A_i,Y_i,1)$
instead of $\ind\set{Y_i\ne 0}$ and $\ind\set{Y_i\ne 1}$.}
\begin{align*}
C_i^0
&
 =
  \ind\set{Y_i\ne 0}
\\
&\;\quad{}
  +\!\sum_{k,j}
   \frac{M_{k,j}\lambda_k}{p_j}g_j\parens{X_i,\!A_i,\!Y_i,0}\,\ind\set{(X_i,\!A_i,\!Y_i)\in\event_j}
\\
C_i^1
&
 =
  \ind\set{Y_i\ne 1}
\\
&\;\quad{}
  +\!\sum_{k,j}
   \frac{M_{k,j}\lambda_k}{p_j}g_j\parens{X_i,\!A_i,\!Y_i,1}\,\ind\set{(X_i,\!A_i,\!Y_i)\in\event_j}
.
\end{align*}
%

\begin{theorem}
\label{thm:alg}
Letting $\rho\coloneqq\max_h\norm{\consM\hvmu(h)-\hvc}_\infty$,
\Alg{EG:general} satisfies the inequality
\[
  \nu_t\le\frac{B\log(\card{\cIdx}+1)}{\eta t}+\eta\rho^2 B.
\]
Thus, for $\eta =\!\frac{\nu}{2\rho^2 B}$, \Alg{EG:general} will return a
$\nu$-approximate saddle point of $L$ in at most $\frac{4\rho^2
  B^2\log(\card{\cIdx}+1)}{\nu^2}$ iterations.\looseness=-1
\end{theorem}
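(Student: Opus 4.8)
The plan is to read \Alg{EG:general} as the \citet{FreundSc96} procedure for approximately solving the zero-sum game with payoff $L$, and to bound $\nu_t$ by combining a no-regret guarantee for the $\vlambda$-player with the exact optimality of the $Q$-player's response. The first observation is that $\Set{\vlambda\in\R_+^{\card{\cIdx}} : \norm{\vlambda}_1\le B}$ is the convex hull of the $\card{\cIdx}+1$ points $\vzero, B\ve_1,\dots,B\ve_{\card{\cIdx}}$, and that the assignment $\lambda_{t,\cidx}=B\exp\set{\theta_{t,\cidx}}/\bigParens{1+\sum_{\cidx'}\exp\set{\theta_{t,\cidx'}}}$ together with the update $\vtheta_{t+1}=\vtheta_t+\eta(\consM\hvmu(h_t)-\hvc)$ is precisely the exponentiated-gradient (Hedge) algorithm run over these $\card{\cIdx}+1$ ``experts'', where the reward collected by expert $\vlambda$ in round $t$ is $\vlambda\trans(\consM\hvmu(h_t)-\hvc) = L(h_t,\vlambda)-\herr(h_t)$ and the effective Hedge step size is $\eta/B$. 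Since each $h_t\in\calH$, every coordinate of $\consM\hvmu(h_t)-\hvc$ lies in $[-\rho,\rho]$, so these rewards are bounded.

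The first main step is the standard regret bound for this algorithm: for every feasible $\vlambda$,
\[
  \frac1t\sum_{t'=1}^t L(h_{t'},\vlambda)
  \;\le\;
  \frac1t\sum_{t'=1}^t L(h_{t'},\vlambda_{t'})
  \;+\;\delta_t,
  \qquad
  \delta_t\coloneqq\frac{B\log(\card{\cIdx}+1)}{\eta t}+\eta\rho^2 B.
\]
The second main step is that $h_{t'}=\BestH(\vlambda_{t'})$ exactly minimizes $L(\cdot,\vlambda_{t'})$ over $\Delta$, so $L(h_{t'},\vlambda_{t'})\le L(Q,\vlambda_{t'})$ for all $Q\in\Delta$. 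Plugging this in, and using that $L$ is affine in $Q$ (hence $\frac1t\sum_{t'}L(h_{t'},\vlambda)=L(\hQ_t,\vlambda)$) and affine in $\vlambda$ (hence $\frac1t\sum_{t'}L(Q,\vlambda_{t'})=L(Q,\hvlambda_t)$), one obtains
\[
  L(\hQ_t,\vlambda)\;\le\;L(Q,\hvlambda_t)+\delta_t
  \quad\text{for all feasible $\vlambda$ and all $Q\in\Delta$.}
\]
Taking $\vlambda=\BestLambda(\hQ_t)$, $Q=\hQ_t$ gives $\overline{L}-L(\hQ_t,\hvlambda_t)\le\delta_t$; taking $\vlambda=\hvlambda_t$, $Q=\BestH(\hvlambda_t)$ gives $L(\hQ_t,\hvlambda_t)-\underline{L}\le\delta_t$. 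Hence $\nu_t=\max\bigSet{L(\hQ_t,\hvlambda_t)-\underline{L},\;\overline{L}-L(\hQ_t,\hvlambda_t)}\le\delta_t$, which is the first inequality. For the second claim, substituting $\eta=\nu/(2\rho^2 B)$ turns the second term of $\delta_t$ into $\nu/2$ and makes the first term at most $\nu/2$ once $t\ge 4\rho^2 B^2\log(\card{\cIdx}+1)/\nu^2$; the moment $\nu_t\le\nu$ the stopping test fires, and the pair $(\hQ_t,\hvlambda_t)$ is a $\nu$-approximate saddle point because the two bounds just derived are exactly the two defining inequalities once the extrema over $Q$ and $\vlambda$ are taken.

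The step that needs real care is the regret bound of the first main step: deriving it with exactly the constants $B\log(\card{\cIdx}+1)/(\eta t)$ and $\eta\rho^2 B$. The points to get right are (i) the $B$-rescaling of the simplex, which is why the learning rate appearing in the $\log$ term is $\eta$ (not $\eta/B$) and why a factor $B$ multiplies the quadratic term; (ii) the extra ``null'' expert $\vzero$, which is why the $\log$ counts $\card{\cIdx}+1$ experts; and (iii) the quadratic term itself, which comes from a second-order bound $e^x\le 1+x+x^2$ in the potential-function telescoping, valid here because $\abs{\eta(\consM\hvmu(h_t)-\hvc)_\cidx}\le\eta\rho$ stays in the admissible range (a mild and ultimately harmless condition on $\eta\rho$). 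It is also worth noting that this regret bound needs only that the reward sequence is bounded, not that it is chosen obliviously, so the fact that $h_t$ --- and hence round $t$'s reward --- depends on $\vlambda_t$ causes no difficulty. Everything else is bookkeeping.
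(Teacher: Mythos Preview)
Your proposal is correct and follows essentially the same approach as the paper: both identify \Alg{EG:general} as the \citet{FreundSc96} scheme with the $\vlambda$-player running exponentiated gradient over the $\card{\cIdx}+1$ vertices $\vzero,B\ve_1,\dotsc,B\ve_{\card{\cIdx}}$, invoke the EG/Hedge regret bound (the paper cites Corollary~2.14 of \citealp{ShalevShwartz12} rather than rederiving it), combine with the exact best response $h_t=\BestH(\vlambda_t)$, and use bilinearity of $L$ to pass to the averages. The only organizational difference is that you derive the single master inequality $L(\hQ_t,\vlambda)\le L(Q,\hvlambda_t)+\delta_t$ and then specialize, whereas the paper proves the two approximate-saddle-point inequalities separately; the content is the same.
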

This theorem, proved in \App{alg}, bounds the suboptimality $\nu_t$ of the average play
$(\hQ_t,\hvlambda_t)$, which is equal to its suboptimality as a
saddle point. The right-hand side of the bound is optimized by
$\eta=\sqrt{\log(\card{\cIdx}+1)}\,/\,(\rho\sqrt{t})$, leading to the
bound $\nu_t\le 2\rho B\sqrt{\log(\card{\cIdx}+1)\,/\,t}$. This bound
decreases with the number of iterations $t$ and grows very slowly
with the number of constraints $\card{\cIdx}$. The quantity $\rho$ is
a problem-specific constant that bounds how much any single classifier
$h \in \mathcal{H}$ can violate the desired set of fairness
constraints. Finally, $B$ is the bound on the $\ell_1$-norm of
$\vlambda$, which we introduced to enable this specific algorithmic
scheme. In general, larger values of $B$ will bring the problem
\eqref{primal} closer to \eqref{eqn:objective_empirical_L}, and thus
also to \eqref{eqn:objective_empirical}, but at the cost of needing
more iterations to reach any given suboptimality. In particular, as we
derive in the theorem, achieving suboptimality $\nu$ may need
up to $4\rho^2 B^2\log(\card{\cIdx}+1)\,/\,\nu^2$ iterations.\looseness=-1

\begin{example}[\dep]
Using the matrix $\consM$ for demographic parity as described in
Section~\ref{sec:problem_formulation}, the cost-sensitive reduction
for a vector of Lagrange multipliers $\vlambda$ uses costs
\begin{equation*}
  C_i^0 = \ind\braces{Y_i \ne 0},
\quad
  C_i^1 = \ind\braces{Y_i \ne 1}
          + \frac{\lambda_{A_i}}{p_{A_i}} - \!\sum_{a\in\Attr}\lambda_a,
\end{equation*}
where $p_a\coloneqq\smash{\Phat}[A=a]$ and
$\lambda_a\coloneqq\lambda_{(a,+)}-\lambda_{(a,-)}$, effectively
replacing two non-negative Lagrange multipliers by a single
multiplier, which can be either positive or negative. Because
$c_\cidx=0$ for all $k$, $\hc_\cidx=\eps_\cidx$. Furthermore, because
all empirical moments are bounded in $[0,1]$, we can assume
$\eps_\cidx\le 1$, which yields the bound $\rho\le 2$. Thus,
\Alg{EG:general} terminates in at most
$16B^2\log(2\,\card{\Attr}+1)\,/\,\nu^2$ iterations.
\end{example}

\begin{example}[\eo]
For equalized odds, the cost-sensitive
reduction for a vector of Lagrange multipliers $\vlambda$ uses costs
\begin{align*}
  C_i^0 &= \ind\braces{Y_i \ne 0},\\
  C_i^1 &= \ind\braces{Y_i \ne 1}
          + \frac{\lambda_{(A_i,Y_i)}}{p_{(A_i,Y_i)}}-\!\sum_{a\in\Attr}\frac{\lambda_{(a,{Y_i})}}{p_{(\star,Y_i)}},
\end{align*}
where $p_{(a,y)}\coloneqq\smash{\Phat}[{A=a},{Y\!=y}]$,
$p_{(\star,y)}\coloneqq\smash{\Phat}[{Y\!=y}]$, and
$\lambda_{(a,y)}\coloneqq\lambda_{(a,y,+)}-\lambda_{(a,y,-)}$.  If we
again assume ${\eps_\cidx\le 1}$, then we obtain the bound $\rho\le
2$. Thus, \Alg{EG:general} terminates in at most
$16B^2\log(4\,\card{\Attr}+1)\,/\,\nu^2$ iterations.
\end{example}

\subsection{Error Analysis}

Our ultimate goal, as formalized in \eqn{objective}, is to minimize
the classification error while satisfying fairness constraints under a
true but unknown distribution over $(X, A, Y)$. In the process of
deriving \Alg{EG:general}, we introduced three different sources of
error.  First, we replaced the true classification error and true
moments with their empirical versions. Second, we introduced a bound $B$ on the magnitude
of $\vlambda$. Finally, we only run the optimization algorithm for a
fixed number of iterations, until it reaches suboptimality level
$\nu$. The first source of error, due to the use of empirical rather than true quantities,
is unavoidable and constitutes the
underlying statistical error. The other two sources of error, the bound~$B$ and the suboptimality level~$\nu$,
stem from the optimization algorithm and can be driven arbitrarily small
at the cost of additional iterations. In this section, we show how
the statistical error and the optimization error
affect the true accuracy and the fairness of the randomized classifier
returned by \Alg{EG:general}---in other words, how well
\Alg{EG:general} solves our original problem~\eqref{eqn:objective}.\looseness=-1


To bound the statistical error, we use the Rademacher complexity of
the classifier family $\calH$, which we denote by $R_n(\calH)$, where
$n$ is the number of training examples. We assume that $R_n(\calH)\le
C n^{-\alpha}$ for some $C\ge 0$ and $\alpha\le 1/2$. We note that $\alpha = 1/2$
in the vast majority of classifier families, including
norm-bounded linear functions (see Theorem~1 of~\citealp{kakade2009complexity}), neural
networks (see Theorem~18 of~\citealp{BartlettMe02}),
and classifier families with bounded VC dimension (see Lemma~4 and Theorem~6
of~\citealp{BartlettMe02}).\looseness=-1


Recall that in our empirical optimization problem we assume
that $\consCh_k = \consC_k+\eps_k$, where $\eps_k\ge 0$ are error
bounds that account for the discrepancy between $\vmu(Q)$ and
$\hvmu(Q)$. In our analysis, we assume that these error bounds
have been set in accordance with the Rademacher complexity of $\calH$.
\begin{assume}
\label{assume:Rn}
There exists $C, C'\ge 0$ and $\alpha\le 1/2$ such that $R_n(\calH)\le
C n^{-\alpha}$ and $\eps_k=C'\sum_{\midx \in \mIdx} \abs{M_{k,j}}
n_\midx^{-\alpha}$, where $n_j$ is the number of data points that
fall in $\event_j$,
\[
  n_j\coloneqq\bigCard{\bigSet{i:\:(X_i, A_i, Y_i)\in\event_j}}.
\]
\end{assume}
The optimization error can be bounded via a careful analysis of the
Lagrangian and the optimality conditions of~\eqref{primal}
and~\eqref{dual}. Combining the three different sources of error
yields the following bound, which we prove in \App{stat}.
\begin{theorem}
\label{thm:stat}
  Let \Assume{Rn} hold for $C'\ge 2C+2+\sqrt{\ln(4/\delta)\,/\,2}$, where
  $\delta > 0$. Let $(\Qh, \vlambdah)$ be any $\nu$-approximate saddle
  point of $L$, let $Q^\star$ minimize $\err(Q)$ subject to
  $\consM\momentVec(Q) \leq \consCvec$, and let
  $p^\star_\midx=\P[\event_\midx]$. Then, with probability at least
  $1-(|\mIdx|+1)\delta$, the distribution $\smash{\Qh}$ satisfies
\begin{align*}
    \err(\Qh)
    &
    \le \err(Q^\star)
        + 2\nu + \tO(n^{\!-\alpha}),\\
\cons_\cidx(\Qh)
    &
     \le c_k + \frac{1\!+\!2\nu}{B} + \!\sum_{\midx \in \mIdx}
     \abs{M_{k,j}}\,\tO(n_\midx^{\!-\alpha}) && \text{for all $k$},
\end{align*}
where $\smash{\tO(\cdot)}$ suppresses polynomial dependence on $\ln(1/\delta)$. If
$np^\star_\midx\ge 8\log(2/\delta)$ for all $j$, then, for all $k$,
\[
\cons_\cidx(\Qh)
     \le c_k +\frac{1\!+\!2\nu}{B} +\!\sum_{\midx \in \mIdx} \abs{M_{k,j}}\, \tO\BigParens{ (np^\star_\midx)^{-\alpha}}.
\]
\end{theorem}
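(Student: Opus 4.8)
The plan is to decompose the analysis into three independent error contributions and then combine them. I would first establish a uniform deviation bound: using the Rademacher complexity assumption $R_n(\calH) \le Cn^{-\alpha}$ together with standard symmetrization and McDiarmid arguments, show that with probability at least $1-\delta$, $\sup_{h\in\calH} \abs{\herr(h) - \err(h)} \le \tO(n^{-\alpha})$, and similarly for each conditional moment, $\sup_{h\in\calH} \abs{\hmu_j(h) - \mu_j(h)} \le \tO(n_j^{-\alpha})$ (conditioning on the draw that determines which points land in $\event_j$, so that $n_j$ plays the role of the sample size). Crucially, the choice $C' \ge 2C + 2 + \sqrt{\ln(4/\delta)/2}$ in \Assume{Rn} is calibrated precisely so that $\eps_k = C'\sum_j \abs{M_{k,j}} n_j^{-\alpha}$ dominates the deviation $\abs{\gamma_k(h) - \hgamma_k(h)} = \abs{\sum_j M_{k,j}(\mu_j(h) - \hmu_j(h))}$ uniformly over $h$, and hence over all randomized $Q$ by linearity. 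Extending the uniform bound from $\calH$ to $\Delta$ (the simplex over $\calH$) is immediate since $\err$ and $\vmu$ are linear in $Q$, so the sup over $\Delta$ is attained at a vertex.

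Next I would handle the approximation error from introducing the bound $B$ on $\norm{\vlambda}_1$. The key observation is that $Q^\star$, the true-optimal feasible classifier, is feasible for \eqref{eqn:objective_empirical} with slack: since $\gamma_k(Q^\star) \le c_k$ and the empirical moments deviate by at most $\eps_k$, we get $\hgamma_k(Q^\star) \le c_k + \eps_k = \hc_k$, so $Q^\star$ is empirically feasible. Therefore $\min_Q \max_{\vlambda} L(Q,\vlambda) \le \herr(Q^\star)$. Running the max over the $B$-bounded set only decreases the value, so the primal value of \eqref{primal} is at most $\herr(Q^\star)$. For the $\nu$-approximate saddle point $(\hQ, \vlambdah)$, the defining inequalities give $L(\hQ, \vlambdah) \le L(Q^\star, \vlambdah) + \nu \le \herr(Q^\star) + \nu$ (using that the penalty term is nonpositive at $Q^\star$ since it is empirically feasible and $\vlambdah \ge 0$), and also $L(\hQ, \vlambdah) \ge L(\hQ, \vlambda) - \nu$ for all admissible $\vlambda$. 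Taking $\vlambda = \vzero$ bounds $\herr(\hQ) \le L(\hQ,\vlambda h)+\nu \le \herr(Q^\star)+2\nu$; combined with the uniform deviation bound on $\err$, this gives the stated accuracy guarantee $\err(\hQ) \le \err(Q^\star) + 2\nu + \tO(n^{-\alpha})$.

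For the fairness guarantee I would exploit the second saddle-point inequality more carefully. Suppose some empirical constraint is violated, $\hgamma_k(\hQ) - \hc_k = \tau > 0$; then choosing $\vlambda = B\ve_k$ gives $L(\hQ, B\ve_k) = \herr(\hQ) + B\tau \le L(\hQ,\vlambdah) + \nu \le \herr(Q^\star) + 2\nu \le 1 + 2\nu$ (since errors lie in $[0,1]$). Rearranging, and using $\herr(\hQ) \ge 0$, yields $\hgamma_k(\hQ) \le \hc_k + (1 + 2\nu)/B$. Finally, translating from empirical to true moments via the uniform bound, $\gamma_k(\hQ) \le \hgamma_k(\hQ) + \sum_j \abs{M_{k,j}}\tO(n_j^{-\alpha}) \le c_k + \eps_k + (1+2\nu)/B + \sum_j \abs{M_{k,j}}\tO(n_j^{-\alpha})$, and absorbing $\eps_k$ into the $\tO$ terms gives the first fairness bound. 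The refinement replacing $n_j$ by $np^\star_j$ follows from a multiplicative Chernoff bound: when $np^\star_j \ge 8\log(2/\delta)$, with high probability $n_j \ge np^\star_j/2$, so $n_j^{-\alpha} \le (np^\star_j/2)^{-\alpha} = \tO((np^\star_j)^{-\alpha})$. A union bound over the $\card{\mIdx}+1$ events (one error bound plus $\card{\mIdx}$ moment bounds) accounts for the $(\card{\mIdx}+1)\delta$ failure probability.

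The main obstacle is the uniform convergence of the \emph{conditional} moments $\hmu_j(h)$: unlike the unconditional error, $\hmu_j(h)$ is a ratio whose denominator $n_j/n$ is itself random, so one must either condition on the index set $\{i : (X_i,A_i,Y_i)\in\event_j\}$ and argue that, given this set, the $X$-values within it behave like an i.i.d. sample of size $n_j$ to which the Rademacher bound applies, or else bound the ratio deviation directly. Making this conditioning argument rigorous — and ensuring the Rademacher complexity $R_{n_j}(\calH)$ is the right quantity and that the constant $C'$ genuinely dominates all the lower-order terms from the deviation bound (the $2C$ from the Rademacher term, the $2$ from the range, and the $\sqrt{\ln(4/\delta)/2}$ from McDiarmid) — is the delicate part; the rest is bookkeeping with the saddle-point inequalities.
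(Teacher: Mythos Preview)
Your proposal is correct and follows essentially the same approach as the paper: uniform concentration of the error and conditional moments via Rademacher complexity (with the conditioning argument for the $\hmu_j$'s that you flag as the delicate step), empirical feasibility of $Q^\star$ from the calibrated choice of $\eps_k$, and the saddle-point inequalities to extract the $2\nu$ error term and the $(1+2\nu)/B$ constraint-violation term, followed by the multiplicative Chernoff bound for the refinement. Your treatment is in fact slightly more direct than the paper's, which packages the choices $\vlambda=\vzero$ and $\vlambda=B\ve_k$ into an explicit ``approximate complementary slackness'' lemma before deriving the empirical error and fairness bounds; otherwise the arguments coincide.
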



In other words, the solution returned by \Alg{EG:general} achieves the
lowest feasible classification error on the true distribution up to the
optimization error, which grows linearly with $\nu$, and the
statistical error, which grows as $n^{-\alpha}$. Therefore, if we want
to guarantee that the optimization error does not dominate the
statistical error, we should set $\nu\propto n^{-\alpha}$.
The fairness constraints on the true distribution are satisfied up to
the optimization error $(1+2\nu)\,\,/B$ and up to the statistical
error. Because the statistical error depends on the moments, and the
error in estimating the moments grows as $n_j^{-\alpha}\ge
n^{-\alpha}$, we can set $B\propto n^{\alpha}$ to guarantee that the
optimization error does not dominate the statistical error.
Combining this reasoning with the learning rate setting of
\Thm{alg} yields the following theorem (proved in \App{stat}).\looseness=-1
\begin{theorem}
\label{thm:main}
Let $\rho\coloneqq\max_h\norm{\consM\hvmu(h)-\hvc}_\infty$. Let
\Assume{Rn} hold for $C'\ge 2C+2+\sqrt{\ln(4/\delta)\,/\,2}$, where
$\delta>0$.  Let $Q^\star$ minimize $\err(Q)$ subject to
$\consM\momentVec(Q) \leq \consCvec$. Then \Alg{EG:general} with
$\nu\propto n^{-\alpha}$, $B\propto n^{\alpha}$ and
$\eta\propto\rho^{-2}n^{-2\alpha}$ terminates in $O(\rho^2
n^{4\alpha}\ln{\card{\cIdx}})$ iterations and returns $\Qh$,
which with probability at least $1-(|\mIdx|+1)\delta$ satisfies
\begin{align*}
    \err(\Qh)
    &
    \le \err(Q^\star) + \tO(n^{-\alpha}),\\
\cons_\cidx(\Qh)
    &
     \le c_k +\!\sum_{\midx \in \mIdx} \abs{M_{k,j}}\,
     \tO(n_\midx^{-\alpha}) &&
     \text{for all $k$}.
\end{align*}
\end{theorem}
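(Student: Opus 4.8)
The statement is obtained by combining the two results already established in the excerpt: \Thm{alg} controls the number of iterations and certifies that the returned pair is a $\nu$-approximate saddle point, and \Thm{stat} converts that approximate saddle point into guarantees on the true error and true fairness. So the plan is (i) plug the prescribed settings $\nu\propto n^{-\alpha}$, $B\propto n^{\alpha}$, $\eta\propto\rho^{-2}n^{-2\alpha}$ into \Thm{alg} to get the iteration count, and (ii) plug the same $\nu$ and $B$ into \Thm{stat} and simplify the bounds, letting $\tO(\cdot)$ absorb constants and $\ln(1/\delta)$ factors. First I would note that, with $\nu\propto n^{-\alpha}$ and $B\propto n^{\alpha}$, the choice $\eta=\nu/(2\rho^2 B)$ from \Thm{alg} is exactly $\eta\propto\rho^{-2}n^{-2\alpha}$; \Thm{alg} then guarantees termination within $4\rho^2 B^2\log(\card{\cIdx}+1)/\nu^2$ iterations, and since $B^2/\nu^2\propto n^{4\alpha}$, this is $O(\rho^2 n^{4\alpha}\ln\card{\cIdx})$. (One also observes here that $\rho\le\max_h\norm{\consM\hvmu(h)}_\infty+\norm{\hvc}_\infty$ is a constant independent of $n$, since $\hmu_j(h)\in[0,1]$ and $\hvc=\consCvec+\veps$ is bounded, so the count is polynomial in $n$.)

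\textbf{Applying the saddle-point guarantee.} \Alg{EG:general} returns only when $\nu_t\le\nu$, and \Thm{alg} certifies this happens within the stated iteration budget; by the definitions of $\underline{L}$ and $\overline{L}$ (computed via $\BestH$ and $\BestLambda$) the termination condition $\nu_t\le\nu$ is precisely the statement that $(\Qh,\vlambdah)$ is a $\nu$-approximate saddle point of $L$. The hypothesis $C'\ge 2C+2+\sqrt{\ln(4/\delta)/2}$ of \Thm{main} is inherited verbatim and is unaffected by the choices of $\nu,B,\eta$, so \Thm{stat} applies to this pair, giving, with probability at least $1-(\card{\mIdx}+1)\delta$,
\[
  \err(\Qh)\le\err(Q^\star)+2\nu+\tO(n^{-\alpha}),
  \qquad
  \cons_\cidx(\Qh)\le c_k+\frac{1+2\nu}{B}+\sum_{\midx\in\mIdx}\abs{M_{k,j}}\,\tO(n_\midx^{-\alpha})\quad\text{for all }k.
\]

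\textbf{Simplification.} Substituting $\nu\propto n^{-\alpha}$ makes $2\nu=\tO(n^{-\alpha})$, so the first inequality becomes $\err(\Qh)\le\err(Q^\star)+\tO(n^{-\alpha})$. For the constraint bound, $\nu=O(1)$ and $B\propto n^{\alpha}$ give $(1+2\nu)/B=O(n^{-\alpha})$; because $n_\midx\le n$ implies $n_\midx^{-\alpha}\ge n^{-\alpha}$, and each row of $\consM$ for the fairness notions under consideration has $\sum_\midx\abs{M_{k,j}}$ bounded below by a positive constant, this $O(n^{-\alpha})$ term is dominated by $\sum_\midx\abs{M_{k,j}}\,\tO(n_\midx^{-\alpha})$ and can be folded into it, yielding the claimed $\cons_\cidx(\Qh)\le c_k+\sum_\midx\abs{M_{k,j}}\,\tO(n_\midx^{-\alpha})$. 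The refined form under $np^\star_\midx\ge 8\log(2/\delta)$ follows identically from the corresponding refined bound in \Thm{stat}.

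\textbf{Main obstacle.} There is no deep obstacle here: the substantive analysis—the Rademacher-complexity bound on the statistical error and the use of the optimality conditions of \eqref{primal}--\eqref{dual} to turn a $\nu$-approximate saddle point into true accuracy and fairness guarantees—is already carried out in \Thm{stat}, and the iteration bound in \Thm{alg}. The only care needed for \Thm{main} is the order-of-magnitude bookkeeping: checking that $\nu\propto n^{-\alpha}$ keeps the optimization error $2\nu$ from dominating the statistical error $\tO(n^{-\alpha})$, that $B\propto n^{\alpha}$ makes the slack $(1+2\nu)/B$ of order $n^{-\alpha}$ and hence absorbable into the per-moment terms $\tO(n_\midx^{-\alpha})$, and that the induced $\eta$ is exactly the one making the iteration bound come out to $O(\rho^2 n^{4\alpha}\ln\card{\cIdx})$.
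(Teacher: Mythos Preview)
Your proposal is correct and matches the paper's own proof, which is a single sentence: ``This follows immediately from \Thm{alg} and the first part of \Thm{stat}.'' Your write-up simply fills in the arithmetic bookkeeping (that $\eta=\nu/(2\rho^2B)\propto\rho^{-2}n^{-2\alpha}$, that $B^2/\nu^2\propto n^{4\alpha}$, and that $(1+2\nu)/B=O(n^{-\alpha})$ is absorbed into the per-moment terms), all of which is accurate; the remark about the refined bound under $np^\star_\midx\ge 8\log(2/\delta)$ is extraneous since \Thm{main} does not restate that part, but it does no harm.
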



\begin{example}[\dep]
If $n_a$ denotes the number of training examples with $A_i=a$, then
\Assume{Rn} states that we should set
$\eps_{(a,+)}=\eps_{(a,-)}=C'(n_a^{-\alpha}+n^{-\alpha})$ and \Thm{main}
then shows that for a suitable setting of $C'$, $\nu$, $B$, and~$\eta$,
\Alg{EG:general} will return a randomized classifier $\hQ$
with the lowest feasible classification error up to $\tO(n^{-\alpha})$
while also approximately satisfying the fairness constraints
\[
   \BigAbs{\E[h(X)\given A=a]-\E[h(X)]}\le\tO(n_a^{-\alpha})
\quad
   \text{for all $a$,}
\]
where $\E$ is with respect to $(X,A,Y)$ as well as $h\sim\smash{\hQ}$.
\end{example}

\begin{example}[\eo]
Similarly, if $n_{(a,y)}$ denotes the number of examples with $A_i=a$
and $Y_i=y$ and $n_{(\star,y)}$ denotes the number of examples with
$Y_i=y$, then \Assume{Rn} states that we should set
$\eps_{(a,y,+)}=\eps_{(a,y,-)}=C'(n_{(a,y)}^{-\alpha}+n_{(\star,y)}^{-\alpha})$
and \Thm{main} then shows that for a suitable setting of $C'$, $\nu$,
$B$, and $\eta$, \Alg{EG:general} will return a randomized classifier~$\hQ$
with the lowest feasible classification error up to
$\tO(n^{-\alpha})$ while also approximately satisfying the fairness
constraints\looseness=-1
\[
   \BigAbs{\E[h(X)\given A=a, Y\!=y]-\E[h(X)\given Y\!=y]}
   \le\tO(n_{(a,y)}^{-\alpha})
\]
for all $a$, $y$. Again, $\E$ includes randomness under the
true distribution over $(X,A,Y)$ as well as $h\sim\hQ$.
\end{example}

\subsection{Grid Search}
\label{sec:grid-search}


In some situations, it is preferable to select a deterministic
classifier, even if that means a lower accuracy or a modest
violation of the fairness constraints. A set of candidate classifiers
can be obtained from the saddle point
$(Q^\dagger,\vlambda^\dagger)$. Specifically, because $Q^\dagger$ is a
minimizer of $L(Q,\vlambda^\dagger)$ and $L$ is linear in $Q$, the distribution
$Q^\dagger$
puts non-zero mass only on classifiers that are the $Q$-player's
best responses to $\vlambda^\dagger$. If we knew $\vlambda^\dagger$,
we could retrieve one such best response via the reduction to
cost-sensitive learning introduced in Section~\ref{sec:reduction}.\looseness=-1

We can compute $\vlambda^\dagger$ using \Alg{EG:general}, but
when the number of constraints is very small, as is the case for demographic
parity or equalized odds with a binary protected attribute, it is also reasonable
to consider a grid of values $\vlambda$, calculate the best response for
each value, and then select the value with the desired tradeoff
between accuracy and fairness.\looseness=-1


\begin{example}[\dep]
When the protected attribute is binary, e.g., $A\in\set{a,a'}$, then
the grid search can in fact be conducted in a single dimension.
The reduction formally takes two real-valued arguments $\lambda_a$
and $\lambda_{a'}$, and then adjusts the costs for predicting $h(X_i)=1$
by the amounts
\[
  \delta_a = \frac{\lambda_a}{p_a} - \lambda_a-\lambda_{a'}
\quad
\text{and}
\quad
  \delta_{a'} = \frac{\lambda_{a'}}{p_{a'}} - \lambda_a-\lambda_{a'},
\]
respectively, on the training examples with $A_i=a$ and $A_i=a'$.
These adjustments satisfy
$
 p_a\delta_a + p_{a'}\delta_{a'}=0
$,
so instead of searching over $\lambda_a$ and $\lambda_{a'}$,
we can carry out the grid search over $\delta_a$ alone and apply the adjustment
$\delta_{a'}=-p_a\delta_a/p_{a'}$ to the protected attribute value $a'$.

With three attribute values, e.g., $A\in\set{a,a',a''}$,
we similarly have
$p_a\delta_a + p_{a'}\delta_{a'} + p_{a''}\delta_{a''}=0$,
so it suffices to conduct grid search in two dimensions rather than three.
\end{example}

\begin{example}[\eo]
If $A\in\set{a,a'}$, we obtain the adjustment
\[
  \delta_{(a,y)}
  =
  \frac{\lambda_{(a,y)}}{p_{(a,y)}}-\frac{\lambda_{(a,y)}+\lambda_{(a',y)}}{p_{(\star,y)}}
\]
for an example with protected attribute value $a$ and label $y$, and similarly
for protected attribute value $a'$. In this case,
separately for each $y$, the adjustments satisfy
\[
  p_{(a,y)}\delta_{(a,y)}+p_{(a',y)}\delta_{(a',y)}=0,
\]
so it suffices to do the grid search over $\delta_{(a,0)}$ and $\delta_{(a,1)}$
and set the parameters for $a'$ to $\delta_{(a',y)}=-p_{(a,y)}\delta_{(a,y)}/p_{(a',y)}$.
\end{example}

\begin{figure*}
\includegraphics{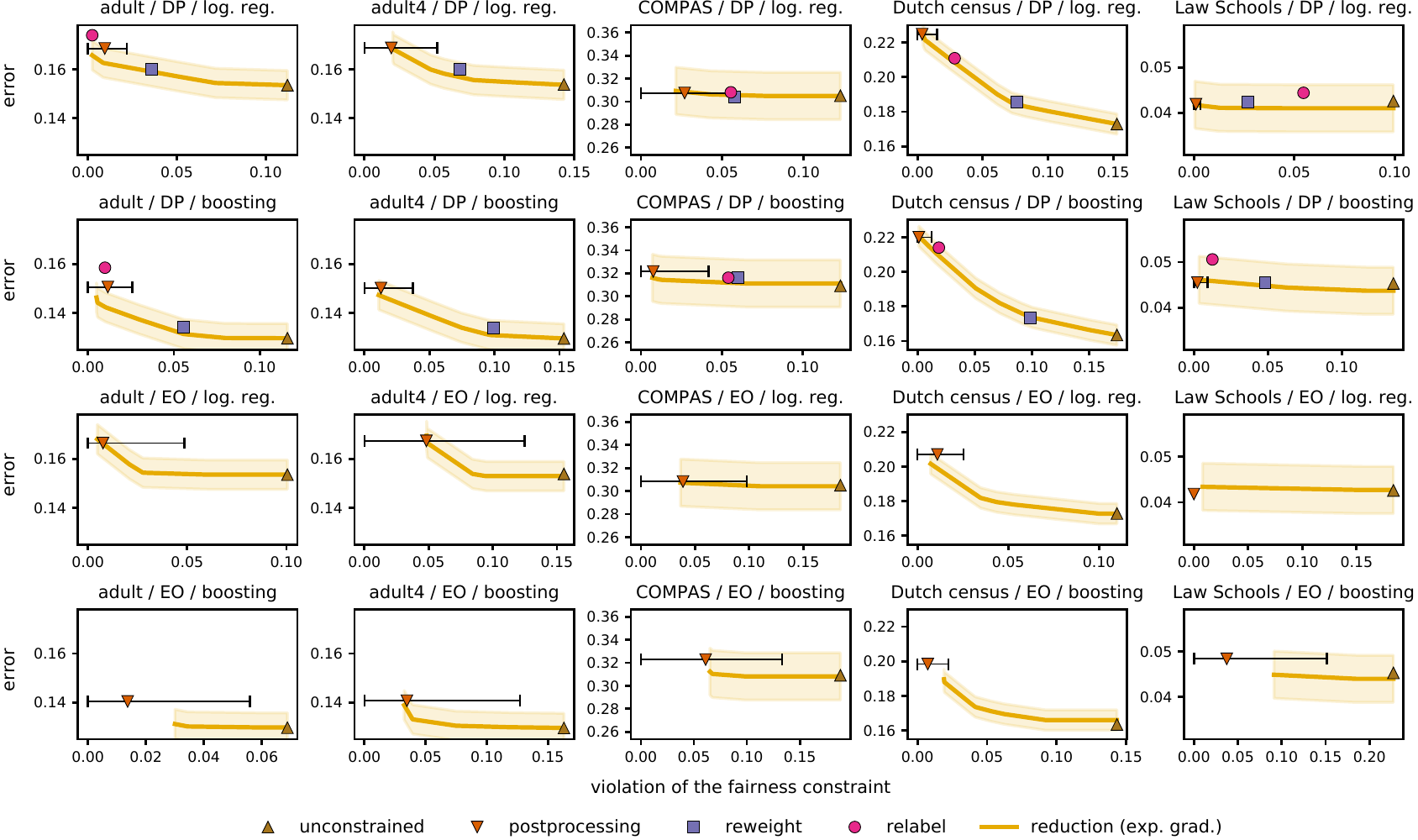}%
\vspace{-6pt}%
\caption{\emph{Test classification error versus constraint violation with respect to DP (top two rows) and EO (bottom two rows)}.
All data sets have binary protected attributes except for \textit{adult4}, which
has four protected attribute values, so relabeling is not applicable there. For our
reduction approach we plot the convex envelope of the classifiers obtained on training data at various accuracy--fairness tradeoffs.
We show 95\% confidence bands for the classification error of our reduction approach and 95\% confidence intervals for the constraint violation
of post-processing.
Our reduction approach dominates or matches the performance of the other approaches up to statistical uncertainty.\looseness=-1}
\label{fig:experiments}%
\vspace{-6pt}%
\end{figure*}

\section{Experimental Results}
\label{sec:experiments}

We now examine how our exponentiated-gradient reduction\footnote{\url{https://github.com/Microsoft/fairlearn}}
performs at the task of binary classification subject to
either demographic parity or equalized odds. We provide an evaluation
of our grid-search reduction in \App{experiments}.

We compared our reduction with the score-based post-processing
algorithm of~\citet{hardt16}, which takes as its input any classifier,
(i.e., a standard classifier without any fairness constraints) and derives a monotone transformation of the classifier's
output to remove any disparity with respect to the training examples.
This post-processing algorithm works with both demographic parity and equalized odds,
as well as with binary and non-binary protected attributes.\looseness=-1

For demographic parity, we also compared our reduction with the
\emph{reweighting} and \emph{relabeling} approaches of~\citet{kamiran12}.
Reweighting can be applied to both binary and non-binary protected attributes
and operates by changing importance weights on each example with the goal of removing any
statistical dependence between the protected attribute and label.\footnote{%
Although reweighting was developed for demographic parity, the weights that it induces are achievable by our grid search, albeit the grid search
for equalized odds rather than demographic parity.\looseness=-1}
Relabeling was developed for binary protected attributes.
First, a classifier is trained on the original data (without considering fairness). The training examples close to the decision boundary are then relabeled
to remove all disparity while minimally affecting accuracy. The final classifier
is then trained on the relabeled data.



As the base classifiers for our reductions, we used the weighted classification implementations of
logistic regression and gradient-boosted decision trees in
scikit-learn~\citep{scikit-learn}. In addition to the three baselines
described above, we also compared our reductions to the ``unconstrained'' classifiers trained to optimize accuracy only.

We used four data sets, randomly splitting each one into training examples (75\%) and test examples (25\%):
\begin{itemize}[nosep]

\item The {adult} income data set~\citep{Lichman:2013} (48,842 examples).  Here the task is to predict whether someone makes more than \$50k per year, with gender as the protected attribute. To examine the performance for non-binary protected attributes,
we also conducted another experiment with the same data, using both gender
and race (binarized into white and non-white) as the protected
attribute. Relabeling, which requires binary protected attributes, was
therefore not applicable here.

\item ProPublica's {COMPAS recidivism data} (7,918 examples).
The task is to predict recidivism from someone's criminal history, jail
and prison time, demographics, and COMPAS risk scores,
with race as the protected attribute (restricted to white and black defendants).

\item
{Law School Admissions Council's National Longitudinal Bar
  Passage Study}~\citep{Wightman} (20,649 examples). Here the task is to predict someone's eventual passage of the bar exam, with race (restricted to white and black only) as the protected attribute.

\item The Dutch census data set (Dutch Central Bureau for Statistics, 2001) (60,420 examples).  Here the task is to predict whether or not someone has a prestigious occupation, with gender as the protected attribute.
\end{itemize}

While all the evaluated algorithms require access to the protected attribute $A$ at training time,
only the post-processing algorithm requires access to $A$ at test time. For a fair comparison,
we included $A$ in the feature vector $X$, so all algorithms had access to it at both the training
time and test time.\looseness=-1

We used the test examples to measure the classification error for each
approach, as well as the
violation of the desired fairness constraints, i.e., ${\max_a\bigAbs{\E[h(X)\given {A=a}]-\E[h(X)]}}$
and $\max_{a,y}\bigAbs{\E[h(X)\given {A=a},
    {Y=y}]-\E[h(X)\given{Y=y}]}$ for demographic parity and
equalized odds, respectively.

We ran our reduction across a wide range of tradeoffs between the classification
error and fairness constraints. We considered $\eps\in\set{0.001,\dotsc,0.1}$ and for each value
ran Algorithm~\ref{alg:EG:general} with $\hc_k=\eps$ across all $k$. As expected,
the returned randomized classifiers tracked the training Pareto frontier (see \Fig{exp:all:train} in \App{experiments}).
In Figure~\ref{fig:experiments}, we evaluate these classifiers alongside the baselines on the \emph{test} data.

For all the data sets, the range of classification errors is much smaller than
the range of constraint violations. Almost all the approaches were able to substantially reduce or remove disparity without much
impact on classifier accuracy. One exception was the Dutch census data
set, where the classification error increased the most in relative terms.\looseness=-1

Our reduction
generally dominated or matched the baselines. The relabeling approach frequently yielded solutions that were not
Pareto optimal. Reweighting yielded solutions on the Pareto frontier, but
often with substantial disparity. As expected, post-processing yielded
disparities that were statistically
indistinguishable from zero, but the resulting classification error was sometimes higher
than achieved by our reduction
under a statistically indistinguishable disparity.
In addition, and unlike the
post-processing algorithm, our reduction can achieve any
desired accuracy--fairness tradeoff, allows a wider range of fairness definitions, and does not require access to the protected attribute at test time.


Our grid-search reduction, evaluated in \App{experiments}, sometimes failed to achieve the
lowest disparities on the training data, but its performance on the test data very closely
matched that of our exponentiated-gradient reduction.
However, if the protected attribute is non-binary, then grid search is not feasible.
For instance, for the version of the adult income data set where the protected attribute takes on four values,
the grid search would need to span three dimensions for demographic
parity and six dimensions for
equalized odds, both of which are prohibitively costly.\looseness=-1

\section{Conclusion}

We presented two reductions for achieving fairness in a
binary classification setting. Our reductions work for any
classifier representation, encompass many
definitions of fairness, satisfy provable guarantees, and work well
in practice.\looseness=-1

Our reductions optimize the tradeoff between accuracy and any (single)
definition of fairness given training-time access to protected attributes.
Achieving fairness when training-time access to protected
attributes is unavailable remains an open problem for future research,
as does the navigation of tradeoffs between accuracy and multiple fairness
definitions.

\section*{Acknowledgements}

We would like to thank Aaron Roth, Sam Corbett-Davies, and Emma Pierson for helpful discussions.



\bibliography{refs}
\bibliographystyle{icml2018}

\newpage
\onecolumn
\appendix
\section{Error and Fairness for Randomized Classifiers}
\label{app:randomized}


Let $D$ denote the distribution over triples $(X,A,Y)$.
The accuracy of a classifier $h\in\calH$ is measured by 0-1 error,
$\err(h)\coloneqq\P_D[h(X)\ne Y]$,
which for a randomized classifier $Q$ becomes
\[
  \err(Q)\coloneqq
\mathop{\P}_{(X,A,Y)\sim D,\,h\sim Q}[h(X)\ne Y]
  =\!\sum_{h\in\calH} Q(h)\,\err(h)
\enspace
.
\]
The fairness constraints on a classifier $h$ are $\consM\vmu(h)\le\consCvec$. Recall
that $\mu_j(h)\coloneqq\E_D[g_j(X,A,Y,h(X))\given\event_j]$. For
a randomized classifier $Q$ we define its moment $\mu_j$ as
\[
  \mu_j(Q)
  \coloneqq \mathop{\E}_{(X,A,Y)\sim D,\,h\sim Q}
  \BigBracks{g_j(X,A,Y,h(X))\BigGiven\event_j}
  =\!\sum_{h\in\calH} Q(h)\mu_j(h)
\enspace,
\]
where the last equality follows because $\event_j$ is independent of the
choice of $h$.
%

\section{Proof of \Thm{alg}}
\label{app:alg}

The proof follows immediately from the analysis of \citet{FreundSc96} applied to the Exponentiated
Gradient (EG) algorithm~\citep{KivinenWa97}, which in our specific case is also equivalent to Hedge~\citep{FreundSc97}.

Let $\Lambda\coloneqq\set{\vlambda\in\R^{\card{\cIdx}}_+:\:\norm{\vlambda'}_1\le B}$
and $\Lambda'\coloneqq\set{\vlambda'\in\R^{\card{\cIdx}+1}_+:\:\norm{\vlambda'}_1=B}$.
We associate any $\vlambda\in\Lambda$ with the $\vlambda'\in\Lambda'$ that is equal to
$\vlambda$ on coordinates $1$ through $\card{\cIdx}$ and puts the remaining mass on the coordinate $\lambda'_{\card{\cIdx}+1}$.

Consider a run of \Alg{EG:general}. For each $\vlambda_t$, let $\vlambda'_t\in\Lambda'$ be the associated element of $\Lambda'$.
Let $\vr_t\coloneqq\consM\hvmu(h_t)-\hvc$ and let $\vr'_t\in\R^{\card{\cIdx}+1}$ be equal to $\vr_t$ on coordinates $1$ through $\card{\cIdx}$
and put zero on the coordinate $r'_{t,\card{\cIdx}+1}$. Thus, for any $\vlambda$ and the associated $\vlambda'$, we have, for all $t$,
\begin{equation}
\label{eqn:vlambda'}
   \vlambda\trans\vr_t=(\vlambda')\trans\vr'_t
\enspace,
\end{equation}
and, in particular,
\begin{equation}
\label{eqn:vlambda'_t}
  \vlambda_t\trans\bigParens{\consM\hvmu(h_t)-\hvc}
  =
  \vlambda_t\trans\vr_t
  =
  (\vlambda'_t)\trans\vr'_t
\enspace.
\end{equation}
We interpret $\vr'_t$ as the reward vector for the $\vlambda$-player. The choices of $\vlambda'_t$ then correspond
to those of the EG algorithm with the learning rate $\eta$.
By the assumption of the theorem we have $\norm{\vr'_t}_\infty=\norm{\vr_t}_\infty\le\rho$.
The regret bound for EG, specifically,
Corollary 2.14 of \citet{ShalevShwartz12}, then states that for any $\vlambda'\in\Lambda'$,
\[
   \sum_{t=1}^T (\vlambda')\trans\vr'_t
\le
   \sum_{t=1}^T (\vlambda'_t)\trans\vr'_t + \underbrace{\frac{B\log(\card{\cIdx}+1)}{\eta}+\eta\rho^2 B T}_{\eqqcolon\zeta_T}
\enspace.
\]
Therefore, by \eqns{vlambda'}{vlambda'_t}, we also have for any $\vlambda\in\Lambda$,
\begin{equation}
\label{eqn:EG:regret}
   \sum_{t=1}^T \vlambda\trans\vr_t
\le
   \sum_{t=1}^T \vlambda_t\trans\vr_t + \zeta_T
\enspace.
\end{equation}
This regret bound can be used to bound the suboptimality of $L(\hQ_T,\hvlambda_T)$ in $\hvlambda_T$
as follows:
\begin{align}
\notag
  L(\hQ_T,\vlambda)
&=\frac{1}{T}
  \sum_{t=1}^T\BigParens{\herr(h_t)+\vlambda\trans\bigParens{\consM\hvmu(h_t)-\hvc}}
\\
\notag
&=\frac{1}{T}
  \sum_{t=1}^T\BigParens{\herr(h_t)+\vlambda\trans\vr_t}
\\
\label{eqn:nu:1}
&\le\frac{1}{T}
  \sum_{t=1}^T\BigParens{\herr(h_t)+\vlambda_t\trans\vr_t}
  +\frac{\zeta_T}{T}
\\
\notag
&=\frac{1}{T}
  \sum_{t=1}^T L(h_t,\vlambda_t)
  +\frac{\zeta_T}{T}
\\
\label{eqn:nu:2}
&\le\frac{1}{T}
  \sum_{t=1}^T L(\hQ_T,\vlambda_t)
  +\frac{\zeta_T}{T}
\\
\label{eqn:nu:3}
&=L\BigParens{\hQ_T,\,\frac{1}{T}\sum_{t=1}^T\vlambda_t}+\frac{\zeta_T}{T}
 =L(\hQ_T,\hvlambda_T)+\frac{\zeta_T}{T}
\enspace.
\end{align}
\Eqn{nu:1} follows from the regret bound~\eqref{eqn:EG:regret}. \Eqn{nu:2} follows because $L(h_t,\vlambda_t)\le L(Q,\vlambda_t)$ for all $Q$ by the choice of $h_t$ as the best response
of the $Q$-player. Finally, \eqn{nu:3} follows by linearity of $L(Q,\vlambda)$ in $\vlambda$. Thus, we have for all $\vlambda\in\Lambda$,
\begin{equation}
\label{eqn:nu:second}
  L(\hQ_T,\hvlambda_T)
  \ge
  L(\hQ_T,\vlambda)-\frac{\zeta_T}{T}
\enspace.
\end{equation}
Also, for any $Q$,
\begin{align}
\label{eqn:nu:4}
  L(Q,\hvlambda_T)
&=\frac{1}{T}
  \sum_{t=1}^T L(Q,\vlambda_t)
\\
\label{eqn:nu:5}
&\ge\frac{1}{T}
  \sum_{t=1}^T L(h_t,\vlambda_t)
\\
\label{eqn:nu:6}
&\ge\frac{1}{T}
  \sum_{t=1}^T L(h_t,\hvlambda_T)-\frac{\zeta_T}{T}
\\
\label{eqn:nu:7}
&=
  L(\hQ_T,\hvlambda_T)-\frac{\zeta_T}{T}
\enspace,
\end{align}
where \eqn{nu:4} follows by linearity of $L(Q,\vlambda)$ in $\vlambda$, \eqn{nu:5} follows by the optimality of $h_t$
with respect to $\hvlambda_t$, \eqn{nu:6} from the regret bound~\eqref{eqn:EG:regret}, and \eqn{nu:7} by linearity of $L(Q,\vlambda)$
in $Q$. Thus, for all $Q$,
\begin{equation}
\label{eqn:nu:first}
  L(\hQ_T,\hvlambda_T)
  \le
  L(Q,\hvlambda_T)+\frac{\zeta_T}{T}
\enspace.
\end{equation}
\Eqns{nu:second}{nu:first} immediately imply that for any $T\ge 1$,
\[
  \nu_T\le\frac{\zeta_T}{T}=\frac{B\log(\card{\cIdx}+1)}{\eta T}+\eta\rho^2 B
\enspace,
\]
proving the first part of the theorem.

The second part of the theorem follows by plugging in $\eta=\frac{\nu}{2\rho^2 B}$
and verifying that if $T\ge\frac{4\rho^2 B^2\log(\card{\cIdx}+1)}{\nu^2}$ then
\[
  \nu_T
\le\frac{B\log(\card{\cIdx}+1)}{\frac{\nu}{2\rho^2 B}\cdot\frac{4\rho^2 B^2\log(\card{\cIdx}+1)}{\nu^2}}+\frac{\nu}{2\rho^2 B}\cdot\rho^2 B
=\frac{\nu}{2}+\frac{\nu}{2}
\enspace.
\]

\section{Proofs of Theorems~\ref{thm:stat} and~\ref{thm:main}}
\label{app:stat}

The bulk of this appendix proves the following theorem, which will immediately imply Theorems~\ref{thm:stat} and~\ref{thm:main}.
\begin{theorem}
\label{thm:app:stat}
Let $(\Qh, \vlambdah)$ be any $\nu$-approximate saddle point of $L$ with
\[
    \consCh_\cidx = \consC_\cidx + \eps_k
\quad
\text{and}
\quad
    \eps_k\ge\sum_{\midx \in \mIdx} \abs{M_{k,j}}\Parens{
             2R_{n_\midx}(\calH) + \frac{2}{\sqrt{n_\midx}} + \sqrt{\frac{\ln(2/\delta)}{2n_\midx}}
    }
\enspace.
\]
Let $Q^\star$ minimize $\err(Q)$ subject to $\consM\momentVec(Q) \leq \consCvec$. Then with probability at least $1-(|\mIdx|+1)\delta$, the distribution $\Qh$ satisfies
\begin{align*}
    \err(\hQ)
    &
    \le
    \err(Q^\star) + 2\nu
    +
    4R_n(\calH) + \frac{4}{\sqrt{n}} + \sqrt{\frac{2\ln(2/\delta)}{n}}
\enspace,
\\
\text{and for all $\cidx$,}
\quad
\cons_\cidx(\Qh)
    &
    \le
    c_k +\frac{1+2\nu}{B} + 2\eps_k
\enspace.
\end{align*}
\end{theorem}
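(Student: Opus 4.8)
The plan is to combine two largely independent ingredients: (i) a uniform-convergence argument controlling the gap between the empirical quantities $\herr,\hvmu$ and their population counterparts $\err,\vmu$ over \emph{all} randomized classifiers $Q\in\Delta$, and (ii) the $\nu$-approximate saddle-point inequalities for $(\hQ,\hvlambda)$, evaluated at a small number of well-chosen competitors. First I would fix the ``good event'' on which the statistical deviations below all hold; a union bound over the $|\mIdx|$ conditional moments and the single error term shows this event has probability at least $1-(|\mIdx|+1)\delta$, which is where the claimed failure probability comes from.

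\textbf{Statistical step.} On the good event I would establish, uniformly over $Q\in\Delta$,
\[
  \bigl|\herr(Q)-\err(Q)\bigr|\le\Delta_n
  \qquad\text{and}\qquad
  \bigl|\hgamma_k(Q)-\cons_k(Q)\bigr|\le\eps_k\ \text{ for all }k,
\]
where $\Delta_n\coloneqq 2R_n(\calH)+\tfrac{2}{\sqrt n}+\sqrt{\ln(2/\delta)/(2n)}$ and $\hgamma_k(Q)=(\consM\hvmu(Q))_k$. For the error this is the standard symmetrization-plus-McDiarmid bound applied to the loss class $\{(x,y)\mapsto\ind\set{h(x)\ne y}:h\in\calH\}$, whose Rademacher complexity is at most $R_n(\calH)$ because $\ind\set{h(x)\ne y}$ is an affine function of $h(x)\in\{0,1\}$ with unit-magnitude slope; the step from $\calH$ to $\Delta$ is free since $\err(Q)-\herr(Q)=\sum_h Q(h)\bigl(\err(h)-\herr(h)\bigr)$ is a convex combination. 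For each moment $\mu_j$ I would condition on the set of indices falling in $\event_j$ --- legitimate precisely because $\event_j$ does not depend on $h$ --- so that $\hmu_j(Q)$ becomes an average of $n_j$ i.i.d.\ draws of a $[0,1]$-valued (again affine-in-$h(X)$) function from the conditional law $D\mid\event_j$; the same argument gives $\sup_Q|\hmu_j(Q)-\mu_j(Q)|\le 2R_{n_j}(\calH)+2/\sqrt{n_j}+\sqrt{\ln(2/\delta)/(2n_j)}$, and $|\hgamma_k(Q)-\cons_k(Q)|\le\sum_j|M_{k,j}|\,|\hmu_j(Q)-\mu_j(Q)|$ combined with the hypothesized lower bound on $\eps_k$ yields the second inequality above. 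An immediate corollary: since $\consM\vmu(Q^\star)\le\consCvec$, on the good event $\consM\hvmu(Q^\star)\le\consCvec+\veps=\hvc$, so $Q^\star$ is feasible for the empirical program and $L(Q^\star,\vlambda)\le\herr(Q^\star)$ for every admissible $\vlambda\ge\vzero$.

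\textbf{Optimization step.} Still on the good event, I would feed three competitors into the $\nu$-saddle inequalities. From $L(\hQ,\hvlambda)\le L(Q^\star,\hvlambda)+\nu\le\herr(Q^\star)+\nu$ (first inequality with $Q=Q^\star$, plus empirical feasibility) and $L(\hQ,\hvlambda)\ge L(\hQ,\vzero)-\nu=\herr(\hQ)-\nu$ (second inequality with $\vlambda=\vzero$) I get $\herr(\hQ)\le\herr(Q^\star)+2\nu$; invoking the statistical step at both $\hQ$ and $Q^\star$ converts this to $\err(\hQ)\le\err(Q^\star)+2\nu+2\Delta_n$, which is the claimed error bound since $2\Delta_n=4R_n(\calH)+4/\sqrt n+\sqrt{2\ln(2/\delta)/n}$. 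For the $k$-th constraint I take $\vlambda=B\ve_k$ (admissible, $\norm{B\ve_k}_1=B$) in the second inequality to get $L(\hQ,\hvlambda)\ge\herr(\hQ)+B\bigl(\hgamma_k(\hQ)-\hc_k\bigr)-\nu$; combining with $L(\hQ,\hvlambda)\le\herr(Q^\star)+\nu$ and the crude bounds $0\le\herr(\hQ)$ and $\herr(Q^\star)\le 1$ gives $\hgamma_k(\hQ)\le\hc_k+(1+2\nu)/B=c_k+\eps_k+(1+2\nu)/B$; a final application of the statistical step at $\hQ$ upgrades $\hgamma_k$ to $\cons_k$ at the cost of another $\eps_k$, yielding $\cons_k(\hQ)\le c_k+2\eps_k+(1+2\nu)/B$ for all $k$.

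\textbf{The main obstacle.} The optimization step is just a short chain of inequalities; the real work --- and the only genuinely delicate point --- is the conditional uniform-convergence bound for $\hmu_j$. One has to argue carefully that conditioning on membership in $\event_j$ really does reduce the problem to an i.i.d.\ sample of the (data-dependent) size $n_j$, which hinges on $\event_j$ being $h$-independent, and then track the resulting complexity $R_{n_j}(\calH)$ and slack $2/\sqrt{n_j}$ with $n_j$ --- not $n$ --- as the effective sample size throughout. The remaining pieces (the affine-in-$h(X)$ reductions keeping every Rademacher complexity bounded by that of $\calH$, and passing from $\calH$ to $\Delta$ by convexity) are routine.
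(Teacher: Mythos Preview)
Your proposal is correct and follows essentially the same route as the paper: the same uniform-convergence lemmas (conditioning on $\event_j$ to get $n_j$-sample bounds for the moments, the affine-in-$h(X)$ reduction, convexity to pass from $\calH$ to $\Delta$), followed by plugging $Q^\star$ and carefully chosen $\vlambda$'s into the approximate-saddle inequalities. The only difference is organizational: the paper first proves an approximate complementary-slackness lemma (essentially by testing $\vlambda=\vzero$ versus $\vlambda=B\ve_{k^\star}$) and then derives the empirical error and constraint bounds from it, whereas you plug $\vlambda=\vzero$ and $\vlambda=B\ve_k$ directly into the second saddle inequality---a mild streamlining of the same argument that yields identical bounds.
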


Let $\Lambda\coloneqq\set{\vlambda\in\R^{\card{\cIdx}}_+:\:\norm{\vlambda'}_1\le B}$ denote the domain of $\vlambda$.
In the remainder of the section, we assume that we are given a pair $(\Qh, \vlambdah)$
which is a $\nu$-approximate saddle point of $L$, i.e.,
\begin{equation}
\label{eqn:nu-opt}
\begin{aligned}
  L(\Qh, \vlambdah)
&\leq L(Q, \vlambdah) + \nu
\quad
  \text{for all $Q\in\Delta$,}
\\
\text{and}
\quad
  L(\Qh,\vlambdah)
&\geq L(\Qh, \vlambda) - \nu
\quad
  \text{for all $\vlambda\in\Lambda$.}
\end{aligned}
\end{equation}
We first establish that the pair $(\Qh, \vlambdah)$ satisfies an approximate version of complementary slackness. For the statement and proof
of the following lemma, recall that $\hvgamma(Q)=\consM\hvmu(Q)$, so the empirical fairness constraints
can be written as $\hvgamma(Q)\le\hvc$ and the Lagrangian $L$ can be written as
\begin{equation}
\label{eqn:L:gamma}
   L(Q,\vlambda)
   =
   \herr(Q)+\sum_{\cidx\in\cIdx}\lambda_\cidx(\hgamma_\cidx(Q)-\hc_\cidx)
\enspace.
\end{equation}
\begin{lemma}[Approximate complementary slackness]
\label{lemma:slackness}
The pair $(\Qh, \vlambdah)$ satisfies
\[
    \sum_{\cidx \in \cIdx} \lambdah_\cidx(\consh_\cidx(\Qh) - \consCh_\cidx) \geq B\max_{\cidx \in \cIdx} \bigParens{\consh_\cidx(\Qh) - \consCh_\cidx}_+ - \nu
\enspace,
\]
where we abbreviate $x_+ = \max\set{x,0}$ for any real number $x$.
\end{lemma}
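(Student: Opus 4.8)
\emph{Proof idea.} The plan is to specialize the second defining inequality of the $\nu$-approximate saddle point to the best response of the $\vlambda$-player. Starting from the condition $L(\Qh,\vlambdah)\ge L(\Qh,\vlambda)-\nu$ for all $\vlambda\in\Lambda$ in~\eqref{eqn:nu-opt}, and writing the Lagrangian in the form~\eqref{eqn:L:gamma}, the $\herr(\Qh)$ terms on both sides cancel, leaving
\[
  \sum_{\cidx\in\cIdx}\lambdah_\cidx\bigParens{\consh_\cidx(\Qh)-\consCh_\cidx}
  \ge
  \sum_{\cidx\in\cIdx}\lambda_\cidx\bigParens{\consh_\cidx(\Qh)-\consCh_\cidx}-\nu
  \qquad\text{for all }\vlambda\in\Lambda.
\]

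First I would evaluate the right-hand side at the $\vlambda$-player's best response $\BestLambda(\Qh)$, namely $\vlambda=\vzero$ when $\consh_\cidx(\Qh)\le\consCh_\cidx$ for all $\cidx$, and $\vlambda=B\ve_{\cidx^*}$ with $\cidx^*\in\argmax_\cidx\parens{\consh_\cidx(\Qh)-\consCh_\cidx}$ otherwise. In the first case the sum on the right equals $0$; in the second it equals $B\parens{\consh_{\cidx^*}(\Qh)-\consCh_{\cidx^*}}>0$. In both cases it equals $B\max_{\cidx\in\cIdx}\parens{\consh_\cidx(\Qh)-\consCh_\cidx}_+$, which is exactly the quantity in the lemma statement, so substituting this choice of $\vlambda$ into the displayed inequality yields the claim. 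The small bookkeeping point handled by splitting into these two cases is precisely producing the truncation $(\cdot)_+$ according to the sign of the largest violation.

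The only thing left to check is feasibility of the chosen $\vlambda$, i.e., that it lies in $\Lambda=\set{\vlambda\in\R^{\card{\cIdx}}_+:\:\norm{\vlambda}_1\le B}$: both $\vzero$ and $B\ve_{\cidx^*}$ have nonnegative coordinates and $\ell_1$ norm at most $B$, so this is immediate. I do not anticipate any real obstacle: the lemma is just the assertion that at an approximate saddle point the value $\sum_\cidx\lambdah_\cidx(\consh_\cidx(\Qh)-\consCh_\cidx)$ realized by $\vlambdah$ cannot fall more than $\nu$ below what the $\vlambda$-player could extract by best-responding, which is $B$ times the largest (truncated) constraint violation.
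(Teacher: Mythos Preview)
Your proposal is correct and follows essentially the same approach as the paper's proof: both invoke the second optimality condition in~\eqref{eqn:nu-opt}, expand the Lagrangian via~\eqref{eqn:L:gamma} to cancel $\herr(\Qh)$, and then plug in the $\vlambda$-player's best response (either $\vzero$ or $B\ve_{\cidx^*}$) to produce the truncated maximum. The paper writes it slightly more tersely but the argument is identical.
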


\begin{proof}
    We show that the lemma follows from the optimality conditions~\eqref{eqn:nu-opt}. We consider a dual variable $\vlambda$ defined as
    \begin{align*}
        \vlambda =
        \begin{cases}
        \vzero
           &\text{if $\hvgamma(\hQ)\le\hvc$,}
        \\
        B\ve_{\cidx^\star}
           &\text{otherwise, where $\cidx^\star=\argmax_\cidx[\hgamma_\cidx(\hQ)-\hc_\cidx]$,}
        \end{cases}
    \end{align*}
    where $\ve_k$ denotes the $k$th vector of the standard basis.
    Then we have by \eqns{nu-opt}{L:gamma} that
    \begin{align*}
        \herr(\Qh) + \sum_{\cidx \in \cIdx} \lambdah_\cidx(\consh_\cidx(\Qh) - \consCh_\cidx)
&
        =   L(\Qh,\hvlambda)
\\[-6pt]
&
        \ge L(\Qh,\vlambda) - \nu
        =   \herr(\Qh) + \sum_{\cidx \in \cIdx} \lambda_\cidx(\consh_\cidx(\Qh) - \consCh_\cidx) - \nu
\enspace,
    \end{align*}
    and the lemma follows by our choice of $\vlambda$.
\end{proof}

Next two lemmas bound the empirical error of $\hQ$ and
also bound the amount by which $\hQ$ violates the empirical fairness constraints.

\begin{lemma}[Empirical error bound]
\label{lemma:regret}
The distribution $\Qh$ satisfies $\herr(\Qh)\le\herr(Q)+2\nu$ for any $Q$
satisfying the empirical fairness constraints, i.e., any $Q$ such that $\hvgamma(Q)\le\hvc$.
\end{lemma}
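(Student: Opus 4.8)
The plan is to obtain the bound directly from the two defining inequalities of a $\nu$-approximate saddle point in~\eqref{eqn:nu-opt}, exploiting only the nonnegativity of the dual iterate $\hvlambda$ and the feasibility of the comparator $Q$. First I would fix an arbitrary $Q\in\Delta$ satisfying the empirical fairness constraints, i.e., $\hvgamma(Q)\le\hvc$. Since every coordinate of $\hvlambda$ is nonnegative and every coordinate of $\hvgamma(Q)-\hvc$ is nonpositive, the penalty term $\sum_{\cidx\in\cIdx}\lambdah_\cidx\bigParens{\hgamma_\cidx(Q)-\hc_\cidx}$ is nonpositive, so by the representation~\eqref{eqn:L:gamma} of the Lagrangian we obtain $L(Q,\hvlambda)\le\herr(Q)$.

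Next I would chain the two saddle-point inequalities around $L(\hQ,\hvlambda)$. The first inequality in~\eqref{eqn:nu-opt} applied to this $Q$ gives $L(\hQ,\hvlambda)\le L(Q,\hvlambda)+\nu\le\herr(Q)+\nu$. The second inequality in~\eqref{eqn:nu-opt}, instantiated at the admissible choice $\vlambda=\vzero\in\Lambda$ (which is legal since $\norm{\vzero}_1=0\le B$), gives $L(\hQ,\hvlambda)\ge L(\hQ,\vzero)-\nu=\herr(\hQ)-\nu$, because the penalty term in~\eqref{eqn:L:gamma} vanishes when $\vlambda=\vzero$. Combining the two displayed bounds on $L(\hQ,\hvlambda)$ yields $\herr(\hQ)-\nu\le\herr(Q)+\nu$, which is exactly the claimed inequality $\herr(\hQ)\le\herr(Q)+2\nu$.

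I do not anticipate a genuine obstacle here: the whole argument is a two-line manipulation of the approximate optimality conditions, analogous in spirit to Lemma~\ref{lemma:slackness}. The only points requiring a modicum of care are (i) checking that $\vzero$ is a feasible move for the $\vlambda$-player so that the second inequality in~\eqref{eqn:nu-opt} may be evaluated at it, and (ii) using feasibility of $Q$ together with $\hvlambda\ge\vzero$ to sign the penalty term as nonpositive; both are immediate. This lemma then feeds into the subsequent analysis, where the constraint-violation side is controlled via Lemma~\ref{lemma:slackness} and the empirical error bound is converted into a population bound through the Rademacher-complexity assumption.
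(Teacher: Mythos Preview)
Your proof is correct. The upper bound on $L(\hQ,\hvlambda)$ is identical to the paper's, but for the lower bound you take a slightly more direct route: you evaluate the second inequality in~\eqref{eqn:nu-opt} at $\vlambda=\vzero$ to obtain $L(\hQ,\hvlambda)\ge\herr(\hQ)-\nu$ immediately, whereas the paper invokes Lemma~\ref{lemma:slackness} to get
\[
L(\hQ,\hvlambda)\ge\herr(\hQ)+B\max_{\cidx}\bigParens{\hgamma_\cidx(\hQ)-\hc_\cidx}_+-\nu
\]
and then discards the nonnegative middle term. Your argument is in effect the $\vlambda=\vzero$ branch of the proof of Lemma~\ref{lemma:slackness} inlined, so nothing is lost. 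The paper's detour pays off only in Lemma~\ref{lemma:constraints}, where the retained $B\max_\cidx(\cdot)_+$ term is actually used; for the present lemma your shortcut is cleaner.
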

\begin{proof}
Assume that $Q$ satisfies $\hvgamma(Q)\le\hvc$. Since $\vlambdah\ge\vzero$, we have
\[
L(Q, \hvlambda)=\herr(Q)+ \hvlambda\trans\bigParens{\hvgamma(Q)-\hvc} \le\herr(Q)
\enspace.
\]
The optimality conditions~\eqref{eqn:nu-opt} imply that
\[
  L(\Qh, \vlambdah) \leq L(Q, \hvlambda) + \nu
\enspace.
\]
Putting these together, we obtain
\begin{align*}
L(\hQ,\hvlambda)
\le
\herr(Q)+\nu
\enspace.
\end{align*}
We next invoke Lemma~\ref{lemma:slackness} to lower bound $L(\hQ,\hvlambda)$ as
\begin{align*}
L(\hQ,\hvlambda)
=
\herr(\Qh) + \sum_{\cidx \in \cIdx} \lambdah_\cidx(\consh_\cidx(\Qh) - \consCh_\cidx)
&\geq \herr(\Qh) + B\max_{\cidx \in \cIdx}\bigParens{\consh_\cidx(\Qh) - \consCh_\cidx}_+ - \nu
\\
&\geq \herr(\Qh) - \nu
\enspace.
\end{align*}
Combining the upper and lower bounds on $L(\hQ,\hvlambda)$ completes the proof.
\end{proof}

\begin{lemma}[Empirical fairness violation]
\label{lemma:constraints}
Assume that the empirical fairness constraints $\hvgamma(Q)\le\hvc$ are feasible.
Then the distribution $\Qh$ approximately satisfies all empirical fairness constraints:
\[
    \max_{\cidx \in \cIdx} \left(\consh_\cidx(\Qh) - \consCh_\cidx\right) \leq \frac{1+2\nu}{B}
\enspace.
\]
\end{lemma}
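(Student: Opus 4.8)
The plan is to play the $\vlambda$-player's feasibility against the $Q$-player's near-optimality, in the same spirit as the proof of \Lemma{regret}. First I would use the feasibility hypothesis to get a cheap upper bound on $L(\hQ,\vlambdah)$: fix any $Q_0$ with $\hvgamma(Q_0)\le\hvc$; since $\vlambdah\ge\vzero$, the term $\vlambdah\trans(\hvgamma(Q_0)-\hvc)$ is nonpositive, so $L(Q_0,\vlambdah)\le\herr(Q_0)\le 1$, using that the empirical error takes values in $[0,1]$. The first inequality of the $\nu$-approximate saddle point conditions~\eqref{eqn:nu-opt} (optimality of $\hQ$ against $\vlambdah$) then gives $L(\hQ,\vlambdah)\le L(Q_0,\vlambdah)+\nu\le 1+\nu$.

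Next I would lower bound $L(\hQ,\vlambdah)$ in terms of the worst constraint violation. This is exactly what \Lemma{slackness} delivers: writing $L$ as in~\eqref{eqn:L:gamma} and adding $\herr(\hQ)$ to both sides of the inequality in that lemma yields $L(\hQ,\vlambdah)=\herr(\hQ)+\sum_\cidx\lambdah_\cidx(\hgamma_\cidx(\hQ)-\hc_\cidx)\ge\herr(\hQ)+B\max_\cidx\bigParens{\hgamma_\cidx(\hQ)-\hc_\cidx}_+-\nu\ge B\max_\cidx\bigParens{\hgamma_\cidx(\hQ)-\hc_\cidx}_+-\nu$, where the last step simply drops the nonnegative term $\herr(\hQ)$. (Equivalently, one can bypass \Lemma{slackness} and evaluate the second inequality of~\eqref{eqn:nu-opt} directly at $\vlambda=B\ve_{\cidx^\star}$, where $\cidx^\star$ indexes the most violated constraint.)

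Chaining the two bounds on $L(\hQ,\vlambdah)$ gives $B\max_\cidx(\hgamma_\cidx(\hQ)-\hc_\cidx)_+\le 1+2\nu$; dividing by $B$ and using the trivial inequality $\max_\cidx(\hgamma_\cidx(\hQ)-\hc_\cidx)\le\max_\cidx(\hgamma_\cidx(\hQ)-\hc_\cidx)_+$ finishes the proof. I do not expect any genuine obstacle: the argument is mechanical once the feasible $Q_0$ and the deviation $B\ve_{\cidx^\star}$ are substituted. The only bookkeeping to watch is the $(\cdot)_+$ clamping (the claimed bound is vacuous when no constraint is violated) and the two appeals to $\herr\in[0,1]$---at the top end to get $\herr(Q_0)\le 1$ and at the bottom end to drop $\herr(\hQ)\ge 0$. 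Conceptually, the $1/B$ scaling is precisely the price of the $\ell_1$ cap on $\vlambda$: a larger $B$ lets the $\vlambda$-player punish violations harder, driving $\hQ$ closer to feasibility.
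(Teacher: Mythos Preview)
Your proposal is correct and follows essentially the same route as the paper: upper-bound $L(\hQ,\vlambdah)$ via a feasible $Q$ and the first saddle-point inequality, lower-bound it via \Lemma{slackness}, then use $\herr\in[0,1]$ and $x\le x_+$. The only cosmetic difference is that the paper keeps $\herr(\hQ)$ and $\herr(Q)$ together and bounds their difference by~$1$, whereas you bound $\herr(Q_0)\le 1$ and drop $\herr(\hQ)\ge 0$ separately---these are equivalent.
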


\begin{proof}
    Let $Q$ satisfy $\hvgamma(Q)\le\hvc$. Applying the same upper and lower bound
    on $L(\Qh,\vlambdah)$ as in the proof of Lemma~\ref{lemma:regret}, we obtain
    \[
        \herr(\Qh) + B\max_{\cidx \in \cIdx}\bigParens{\consh_\cidx(\Qh) - \consCh_\cidx}_+ - \nu
        \;\leq\;
        L(\Qh,\vlambdah)
        \;\leq\;
        \herr(Q) + \nu
\enspace.
    \]
    We can further upper bound $\herr(Q)-\herr(\Qh)$ by 1 and use $x \leq x_+$ for any real number $x$ to complete the proof.
\end{proof}

It remains to lift the bounds on empirical classification error and constraint violation into the corresponding bounds on true classification error and the violation
of true constraints. We will use the standard machinery of uniform convergence bounds via the (worst-case) Rademacher complexity.

Let $\calF$ be a class of functions $f:\calZ\to[0,1]$ over some space $\calZ$. Then the (worst-case) \emph{Rademacher complexity} of $\calF$ is defined as
\[
   R_n(\calF)
\coloneqq
   \sup_{z_1,\dotsc,z_n\in\calZ}
   \E\Bracks{
   \sup_{f\in\calF}\Abs{
      \frac{1}{n} \sum_{i=1}^n
        \sigma_i f(z_i)
   }}
\enspace,
\]
where the expectation is over the i.i.d.\ random variables $\sigma_1,\dotsc,\sigma_n$ with $\P[\sigma_i=1]=\P[\sigma_i=-1]=1/2$.

We first prove concentration of generic moments derived from classifiers $h\in\calH$ and then move to bounding the deviations from true classification error
and true fairness constraints.
\begin{lemma}[Concentration of moments]
\label{lemma:dev}
Let $g:\calX\times\Attr\times\set{0,1}\times\set{0,1}\to[0,1]$ be any function and let $D$ be a distribution over $(X,A,Y)$.
Then with probability at least $1-\delta$, for all $h\in\calH$,
  \[
  \BigAbs{
     \Ehat\bigBracks{g(X,A,Y,h(X))}-\E\bigBracks{g(X,A,Y,h(X))}
  }
  \le
     2R_n(\calH) + \frac{2}{\sqrt{n}} + \sqrt{\frac{\ln(2/\delta)}{2n}}
\enspace,
  \]
where the expectation is with respect to $D$ and the empirical expectation is based on $n$ i.i.d.\ draws from $D$.
\end{lemma}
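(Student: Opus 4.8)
The plan is to recognize the left-hand side as a standard uniform-deviation quantity for a composed function class and then to reduce the Rademacher complexity of that class to $R_n(\calH)$. Let $\calZ\coloneqq\calX\times\Attr\times\set{0,1}$ and define
$\mathcal{G}\coloneqq\bigSet{(x,a,y)\mapsto g(x,a,y,h(x)):\,h\in\calH}$,
a class of functions $\calZ\to[0,1]$. Then the quantity we must bound is exactly $\sup_{g'\in\mathcal{G}}\bigAbs{\Ehat[g']-\E[g']}$ evaluated on the triples $(X_i,A_i,Y_i)$, and it suffices to (i) invoke the usual uniform-convergence bound for $\mathcal{G}$ and (ii) show $R_n(\mathcal{G})\le R_n(\calH)+1/\sqrt{n}$.

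For step (i) I would run the textbook symmetrization-plus-bounded-differences argument. Since every member of $\mathcal{G}$ takes values in $[0,1]$, changing a single sample point moves the empirical mean by at most $1/n$, so the map $(z_1,\dots,z_n)\mapsto\sup_{g'\in\mathcal{G}}\bigParens{\E[g']-\Ehat[g']}$ has the bounded-differences property with constant $1/n$. McDiarmid's inequality then gives, with probability at least $1-\delta/2$, that this supremum is at most its expectation plus $\sqrt{\ln(2/\delta)/(2n)}$, and symmetrization (a ghost sample) bounds the expectation by $2R_n(\mathcal{G})$. Applying the same reasoning to the opposite sign and taking a union bound yields, with probability at least $1-\delta$,
\[
  \sup_{h\in\calH}\BigAbs{\Ehat\bigBracks{g(X,A,Y,h(X))}-\E\bigBracks{g(X,A,Y,h(X))}}\le 2R_n(\mathcal{G})+\sqrt{\frac{\ln(2/\delta)}{2n}}.
\]

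Step (ii) is the only non-routine part and is where I expect the real work to be. Because $h(x)\in\set{0,1}$, I would decompose $g(x,a,y,h(x))=g_0(z)+h(x)\,\Delta(z)$ with $z=(x,a,y)$, $g_0(z)\coloneqq g(x,a,y,0)\in[0,1]$, and $\Delta(z)\coloneqq g(x,a,y,1)-g(x,a,y,0)\in[-1,1]$, where crucially neither $g_0$ nor $\Delta$ depends on $h$. Fixing a sample and taking $\sigma$-expectations, the triangle inequality gives
\[
  \E_\sigma\sup_{h\in\calH}\Abs{\frac1n\sum_i\sigma_i g(z_i,h(x_i))}\le\E_\sigma\Abs{\frac1n\sum_i\sigma_i g_0(z_i)}+\E_\sigma\sup_{h\in\calH}\Abs{\frac1n\sum_i\sigma_i h(x_i)\Delta(z_i)}.
\]
The first term is at most $\sqrt{\E_\sigma\bigBracks{\bigParens{\frac1n\sum_i\sigma_i g_0(z_i)}^2}}=\sqrt{\frac1{n^2}\sum_i g_0(z_i)^2}\le 1/\sqrt n$ by Jensen and $g_0\in[0,1]$. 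For the second term, I would argue that $s\mapsto\E_\sigma\sup_{h}\Abs{\frac1n\sum_i\sigma_i s_i h(x_i)}$ is convex on $[-1,1]^n$ (an average of suprema of absolute values of linear maps), so its maximum over the cube is attained at a vertex $s\in\set{-1,1}^n$; at any such vertex $(\sigma_i s_i)_i$ has the same law as $(\sigma_i)_i$, so the value equals $\E_\sigma\sup_h\Abs{\frac1n\sum_i\sigma_i h(x_i)}$. Hence the second term is at most $\E_\sigma\sup_h\Abs{\frac1n\sum_i\sigma_i h(x_i)}$, and taking the supremum over $z_1,\dots,z_n$ throughout yields $R_n(\mathcal{G})\le R_n(\calH)+1/\sqrt n$.

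Substituting this into the bound from step (i) gives $2R_n(\calH)+2/\sqrt n+\sqrt{\ln(2/\delta)/(2n)}$, which is the claim. The delicate point to get right is the convexity/extreme-point argument used to absorb the per-coordinate weights $\Delta(z_i)\in[-1,1]$ into $R_n(\calH)$ with no residual dependence on $g$; the constant-shift term $g_0$ contributes the $1/\sqrt n$, and everything else is off-the-shelf Rademacher machinery that could alternatively be quoted from a standard reference.
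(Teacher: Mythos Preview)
Your proof is correct and follows the same skeleton as the paper's: define the composed class $\mathcal{G}$, invoke the standard uniform-deviation bound $2R_n(\mathcal{G})+\sqrt{\ln(2/\delta)/(2n)}$, then bound $R_n(\mathcal{G})\le R_n(\calH)+1/\sqrt{n}$ via the decomposition $g_0+h\cdot\Delta$ with $g_0\in[0,1]$ and $\Delta\in[-1,1]$. The paper obtains each of these ingredients by citation---Theorem~3.2 of \citet{BoucheronBoLu05} for the deviation bound, Theorem~12(5) of \citet{BartlettMe02} for the offset term, and the Ledoux--Talagrand contraction inequality for the weighted term---whereas you supply short self-contained proofs: McDiarmid plus symmetrization for the first, Jensen for the offset, and a convexity/extreme-point argument for the contraction. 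Your extreme-point argument (the map $s\mapsto\E_\sigma\sup_h\bigAbs{\tfrac1n\sum_i\sigma_i s_i h(x_i)}$ is convex on $[-1,1]^n$, hence maximized at a vertex, where $(\sigma_i s_i)$ is equidistributed with $(\sigma_i)$) is a clean, elementary replacement for Ledoux--Talagrand that works here precisely because the ``contraction'' is just coordinate-wise multiplication by a bounded scalar; the paper's cited result is more general (arbitrary $1$-Lipschitz contractions) but overkill for this lemma. Either route yields the same constants.
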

\begin{proof}
Let $\calF\coloneqq\set{f_h}_{h\in\calH}$ be the class of functions $f_h: (x,y,a)\mapsto g\bigParens{x,y,a,h(x)}$.
By Theorem 3.2 of \citet{BoucheronBoLu05}, we then
have with probability at least $1-\delta$, for all $h$,
\begin{equation}
\label{eqn:fh:bound}
  \BigAbs{
     \Ehat\bigBracks{g(X,A,Y,h(X))}-\E\bigBracks{g(X,A,Y,h(X))}
  }
  =
  \BigAbs{\Ehat[f_h]-\E[f_h]}
  \le
  2 R_n(\calF)
  +
  \sqrt{\frac{\ln(2/\delta)}{2n}}
\enspace.
\end{equation}
We will next bound $R_n(\calF)$ in terms of $R_n(\calH)$.
Since $h(x)\in\set{0,1}$, we can write
\[
  f_h(x,y,a)=h(x) g(x,a,y,1) + \BigParens{1-h(x)}g(x,a,y,0)
            =g(x,a,y,0)+h(x)\BigParens{g(x,a,y,1)-g(x,a,y,0)}
  \enspace.
\]
Since $\bigAbs{g(x,a,y,0)}\le 1$ and $\bigAbs{g(x,a,y,1)-g(x,a,y,0)}\le 1$, we can invoke Theorem 12(5) of \citet{BartlettMe02} for bounding
function classes shifted by an offset, in our case $g(x,a,y,0)$, and Theorem 4.4 of \citet{LedouxTa91} for bounding function classes
under contraction, in our case $g(x,a,y,1)-g(x,a,y,0)$, yielding
\[
  R_n(\calF)\le \frac{1}{\sqrt{n}} + R_n(\calH)
\enspace.
\]
Together with the bound~\eqref{eqn:fh:bound}, this proves the lemma.
\end{proof}

\begin{lemma}[Concentration of loss]
\label{lemma:dev:loss}
  With probability at least $1-\delta$, for all $Q\in\Delta$,
  \[
  \left| \herr(Q)-\err(Q) \right|
  \le
     2R_n(\calH) + \frac{2}{\sqrt{n}} + \sqrt{\frac{\ln(2/\delta)}{2n}}
\enspace.
  \]
\end{lemma}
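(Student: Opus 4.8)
The plan is to derive this directly from \Lemma{dev} by instantiating the generic moment function as the misclassification indicator, and then lifting the resulting uniform-convergence bound from deterministic classifiers $h\in\calH$ to randomized classifiers $Q\in\Delta$ via a simple convexity (vertex) argument.

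First I would set $g_\err(x,a,y,\hy)\coloneqq\ind\set{\hy\ne y}$, which takes values in $\set{0,1}\subseteq[0,1]$ and is therefore an admissible choice of $g$ in \Lemma{dev}. With this choice, $\E\bigBracks{g_\err(X,A,Y,h(X))}=\P[h(X)\ne Y]=\err(h)$ and likewise $\Ehat\bigBracks{g_\err(X,A,Y,h(X))}=\herr(h)$. Applying \Lemma{dev} then gives, with probability at least $1-\delta$, the uniform bound
\[
  \sup_{h\in\calH}\bigAbs{\herr(h)-\err(h)}\le 2R_n(\calH)+\frac{2}{\sqrt{n}}+\sqrt{\frac{\ln(2/\delta)}{2n}}
\enspace.
\]

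Next I would extend this to all $Q\in\Delta$. Both $\err(Q)=\sum_{h\in\calH}Q(h)\,\err(h)$ and $\herr(Q)=\sum_{h\in\calH}Q(h)\,\herr(h)$ are affine in $Q$, so $\herr(Q)-\err(Q)=\sum_{h\in\calH}Q(h)\bigParens{\herr(h)-\err(h)}$, and since $Q$ is a probability distribution over $\calH$ we get $\bigAbs{\herr(Q)-\err(Q)}\le\sum_{h\in\calH}Q(h)\bigAbs{\herr(h)-\err(h)}\le\sup_{h\in\calH}\bigAbs{\herr(h)-\err(h)}$. On the probability-$(1-\delta)$ event above, this supremum is at most the claimed right-hand side, which proves the lemma.

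There is essentially no hard step: the statement is a routine corollary of \Lemma{dev}. The only point worth a moment of care is that a \emph{single} high-probability event must hold simultaneously for every $Q\in\Delta$; this is ensured precisely because we first pass to the uniform-over-$\calH$ statement provided by \Lemma{dev} and only afterwards take convex combinations, rather than attempting to union-bound over the uncountable simplex $\Delta$ directly.
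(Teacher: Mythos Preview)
Your proposal is correct and follows essentially the same approach as the paper: instantiate \Lemma{dev} with the 0--1 loss $g(x,a,y,\hy)=\ind\set{\hy\ne y}$ to obtain the uniform bound over $h\in\calH$, and then lift to all $Q\in\Delta$ by taking convex combinations. Your explicit remark about why a single high-probability event suffices is a nice touch that the paper leaves implicit.
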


\begin{proof}
We first use
\Lemma{dev} with $g:(x,a,y,\hat{y})\mapsto\ind\set{y\ne\hat{y}}$ to obtain, with probability $1-\delta$, for all $h$,
\[
  \BigAbs{\herr(h)-\err(h)}
  =
  \BigAbs{\Ehat[f_h]-\E[f_h]}
  \le
     2R_n(\calH) + \frac{2}{\sqrt{n}} + \sqrt{\frac{\ln(2/\delta)}{2n}}
\enspace.
\]
The lemma now follows for any $Q$ by taking a convex combination of the corresponding bounds on $h\in\calH$.\footnote{%
The same reasoning applies for general error, $\err(h) = \E[g_\err(X\!,\!A,\!Y\!,\!h(X))]$, by using $g=g_\err$ in
\Lemma{dev}.}
\end{proof}

Finally, we show a result for the concentration of the empirical constraint violations to their population counterparts. We will actually show the concentration of the individual moments $\momenth_\midx(Q)$ to $\moment_\midx(Q)$ uniformly for all $Q\in\Delta$. Since $\consM$ is a fixed matrix not dependent on the data, this also directly implies concentration of the constraints $\hvgamma(Q)=\consM\hvmu(Q)$ to $\vgamma(Q)=\consM\vmu(Q)$. For this result, recall that
$n_j=\card{\set{i\in[n]:\:(X_i,\!A_i,\!Y_i)\in\event_\midx}}$ and $p^\star_\midx = \P[\event_\midx]$.

\begin{lemma}[Concentration of conditional moments]
\label{lemma:dev:cons}
    For any $\midx\in\mIdx$, with probability at least $1-\delta$, for all $Q$,
    \[
        \bigAbs{\momenth_\midx(Q) - \moment_\midx(Q)}
        \leq
        2R_{n_\midx}(\calH) + \frac{2}{\sqrt{n_\midx}} + \sqrt{\frac{\ln(2/\delta)}{2n_\midx}}
\enspace.
    \]
    If $np^\star_\midx\ge 8\log(2/\delta)$, then with probability at least $1-\delta$, for all $Q$,
    \[
        \bigAbs{\momenth_\midx(Q) - \moment_\midx(Q)}
        \leq
        2R_{np^\star_\midx/2}(\calH) + 2\sqrt{\frac{2}{np^\star_\midx}} + \sqrt{\frac{\ln(4/\delta)}{np^\star_\midx}}
\enspace.
    \]
\end{lemma}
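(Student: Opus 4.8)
The plan is to reduce both claims to \Lemma{dev}, applied not to the full sample but to the sub-sample of points that fall in the conditioning event $\event_\midx$, and then, for the second claim, to add a Chernoff bound that controls $n_\midx$ from below. First I would condition on the realized value of $n_\midx$ together with the identity of the indices $i$ with $(X_i,A_i,Y_i)\in\event_\midx$. Conditioned on this, those $n_\midx$ points are i.i.d.\ draws from the conditional law $D(\cdot\given\event_\midx)$ (and are independent of the out-of-event points). For a fixed $h\in\calH$ we have $\momenth_\midx(h)=\Ehat\bigBracks{g_\midx(X,A,Y,h(X))\given\event_\midx}$ and $\moment_\midx(h)=\E\bigBracks{g_\midx(X,A,Y,h(X))\given\event_\midx}$, so \Lemma{dev}, applied with the function $g_\midx$, with $D$ replaced by $D(\cdot\given\event_\midx)$, and with sample size $n_\midx$, yields, with conditional probability at least $1-\delta$, simultaneously for all $h\in\calH$,
\[
  \bigAbs{\momenth_\midx(h)-\moment_\midx(h)}
  \le 2R_{n_\midx}(\calH)+\frac{2}{\sqrt{n_\midx}}+\sqrt{\frac{\ln(2/\delta)}{2n_\midx}}.
\]
Two observations make this work: the worst-case Rademacher complexity $R_{n_\midx}(\calH)$ is distribution-free, so nothing changes when we pass to the conditional law; and since the bound depends on the sub-sample only through $n_\midx$ and holds for every realization of $(n_\midx,\text{indices})$, it holds unconditionally. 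A triangle inequality over $h\sim Q$ (i.e.\ a convex combination of the per-$h$ bounds, using $\momenth_\midx(Q)=\sum_h Q(h)\momenth_\midx(h)$ and $\moment_\midx(Q)=\sum_h Q(h)\moment_\midx(h)$) then extends the bound to all $Q\in\Delta$ at once, which is the first claim. (If $n_\midx=0$ the statement is vacuous; under the hypothesis of the second part this case does not arise on the good event.)

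For the second claim I would additionally bound the random quantity $n_\midx$ below by its mean. Since $n_\midx$ is a sum of $n$ i.i.d.\ Bernoulli$(p^\star_\midx)$ indicators, the multiplicative Chernoff bound gives $\P\bracks{n_\midx<np^\star_\midx/2}\le\exp(-np^\star_\midx/8)$, which is at most $\delta/2$ precisely when $np^\star_\midx\ge 8\log(2/\delta)$. On the complementary event I would invoke the first claim at confidence $1-\delta/2$ (replacing $\delta$ by $\delta/2$, so $\ln(2/\delta)$ becomes $\ln(4/\delta)$) and then use that $R_m(\calH)$, $m^{-1/2}$, and $\sqrt{\ln(4/\delta)/m}$ are all non-increasing in $m$: evaluating them at $m=n_\midx\ge np^\star_\midx/2$ bounds them by the same expressions at $m=np^\star_\midx/2$, i.e.\ by $2R_{np^\star_\midx/2}(\calH)$, $2\sqrt{2/(np^\star_\midx)}$, and $\sqrt{\ln(4/\delta)/(np^\star_\midx)}$. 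A union bound over the two failure events (each of probability at most $\delta/2$) gives the stated bound with probability at least $1-\delta$.

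The only ingredient that is not quite off-the-shelf is the monotonicity $R_{m+1}(\calH)\le R_m(\calH)$ of the worst-case Rademacher complexity, which I would establish by a leave-one-out averaging argument: for any $z_1,\dots,z_{m+1}$, write $\frac{1}{m+1}\sum_{i=1}^{m+1}\sigma_i f(z_i)=\frac{1}{m+1}\sum_{k=1}^{m+1}\bigParens{\frac{1}{m}\sum_{i\ne k}\sigma_i f(z_i)}$, move the supremum over $f$ inside the outer average via the triangle inequality, and take expectation over $\sigma$, bounding each term by $R_m(\calH)$. I expect the main obstacle to be careful bookkeeping rather than any deep difficulty: one must verify that conditioning on which indices land in $\event_\midx$ genuinely produces an i.i.d.\ sample from $D(\cdot\given\event_\midx)$ of the (random) size $n_\midx$, so that \Lemma{dev} applies with that random sample size, and that the distribution-free definition of $R_n(\calH)$ is what lets us keep $R_{n_\midx}(\calH)$ rather than a conditional-law Rademacher term. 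Everything else is Chernoff, monotonicity, and a union bound.
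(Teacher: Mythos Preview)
Your proposal is correct and follows essentially the same route as the paper: condition on the indices falling in $\event_\midx$ so that the sub-sample is i.i.d.\ from $D(\cdot\given\event_\midx)$, invoke \Lemma{dev} with $g_\midx$ and sample size $n_\midx$, and pass to $Q$ by convex combination. In fact your write-up is more complete than the paper's own proof of this lemma, which only spells out the first claim and defers the Chernoff step (and implicitly the monotonicity of $R_m(\calH)$) to the proof of \Thm{stat}; your explicit leave-one-out argument for $R_{m+1}(\calH)\le R_m(\calH)$ and the $\delta/2$ splitting fill in exactly what the paper leaves tacit.
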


\begin{proof}
Our proof largely follows the proof of Lemma 2 of \citet{woodworth17}, with appropriate modifications for our more general constraint definition. Let $S_\midx\coloneqq\set{i\in[n]:\:(X_i,\!A_i,\!Y_i)\in\event_\midx}$ be the set of indices such that the corresponding examples fall in the event $\event_\midx$. Note that we have defined $n_\midx = |S_\midx|$. Let $D(\cdot)$ denote the joint distribution of $(X,A,Y)$. Then, conditioned on $i \in S_\midx$, the random variables $g_\midx(X_i,\!A_i,\!Y_i,\!h(X_i))$ are i.i.d.\ draws from
the distribution $D(\cdot\given\event_j)$, with mean $\moment_\midx(h)$. Applying \Lemma{dev} with $g_\midx$ and the distribution $D(\cdot\given\event_j)$ therefore yields, with probability $1-\delta$, for all $h$,
\[
        \bigAbs{\momenth_\midx(h) - \moment_\midx(h)}
        \leq
        2R_{n_\midx}(\calH) + \frac{2}{\sqrt{n_\midx}} + \sqrt{\frac{\ln(2/\delta)}{2n_\midx}}
\enspace,
\]
The lemma now follows by taking a convex combination over $h$.
\end{proof}

\begin{proof}[Proof of \Thm{app:stat}]
    We now use the lemmas derives so far to prove \Thm{app:stat}. We first use \Lemma{dev:cons} to bound the gap
    between the empirical and population fairness constraints. The lemma implies that with probability at least $1-\card{\mIdx}\delta$, for all $\cidx \in \cIdx$ and all $Q\in\Delta$,
    \begin{align}
\notag
    \bigAbs{\consh_\cidx(Q) - \cons_\cidx(Q)}
    &= \BigAbs{\consM_\cidx \BigParens{\momentVech(Q) - \momentVec(Q)} }
    \\
\notag
        &\leq \sum_{\midx \in \mIdx} \abs{M_{k,j}}\BigAbs{\hmu_j(Q)-\mu_j(Q)}
    \\
\notag
        &\leq \sum_{\midx \in \mIdx} \abs{M_{k,j}} \left(
           2R_{n_\midx}(\calH) + \frac{2}{\sqrt{n_\midx}} + \sqrt{\frac{\ln(2/\delta)}{2n_\midx}}
        \right)
    \\
\label{eqn:epsk-dev}
        &\le\eps_k
\enspace.
    \end{align}
    Note that our choice of $\consCvech$ along with \eqn{epsk-dev} ensure that $\consh_\cidx(Q^\star) \leq \consCh_\cidx$ for all $\cidx \in \cIdx$. Using Lemma~\ref{lemma:regret} allows us to conclude that
    \[
        \herr(\hQ)\le\herr(Q^\star)+2\nu
\enspace.
    \]
    We now invoke \Lemma{dev:loss} twice, once for $\herr(\Qh)$ and once for $\herr(Q^\star)$, proving the first statement of the theorem.

    The above shows that $Q^\star$ satisfies the empirical fairness constraints, so we can use Lemma~\ref{lemma:constraints}, which together
    with \eqn{epsk-dev} yields
     \[
       \cons_\cidx(\hQ)
       \le \consh_\cidx(\hQ) + \eps_k
       \le \hc_\cidx+\frac{1+2\nu}{B}+\eps_k=c_\cidx+\frac{1+2\nu}{B}+2\eps_k
     \enspace,
     \]
     proving the second statement of the theorem.
\end{proof}

We are now ready to prove Theorems~\ref{thm:stat} and~\ref{thm:main}

\begin{proof}[Proof of \Thm{stat}]
%
%
The first part of the theorem follows immediately from \Assume{Rn} and \Thm{app:stat} (with $\delta/2$ instead of $\delta$). The statement in fact holds with probability at least $1-(\card{\mIdx}+1)\delta/2$. For the second part, we use the multiplicative Chernoff bound for binomial random variables. Note that $\E[n_\midx]=np^\star_\midx$, and we assume that $np^\star_\midx\ge 8\ln(2/\delta)$, so the multiplicative Chernoff bound implies that $n_\midx\le np^\star_\midx/2$ with probability at most $\delta/2$. Taking the union bound across all $\midx$ and combining with the first part of the theorem then proves the second part.
\end{proof}

\begin{proof}[Proof of \Thm{main}]
This follows immediately from \Thm{alg} and the first part of \Thm{stat}.
\end{proof}


\begin{figure*}
\includegraphics{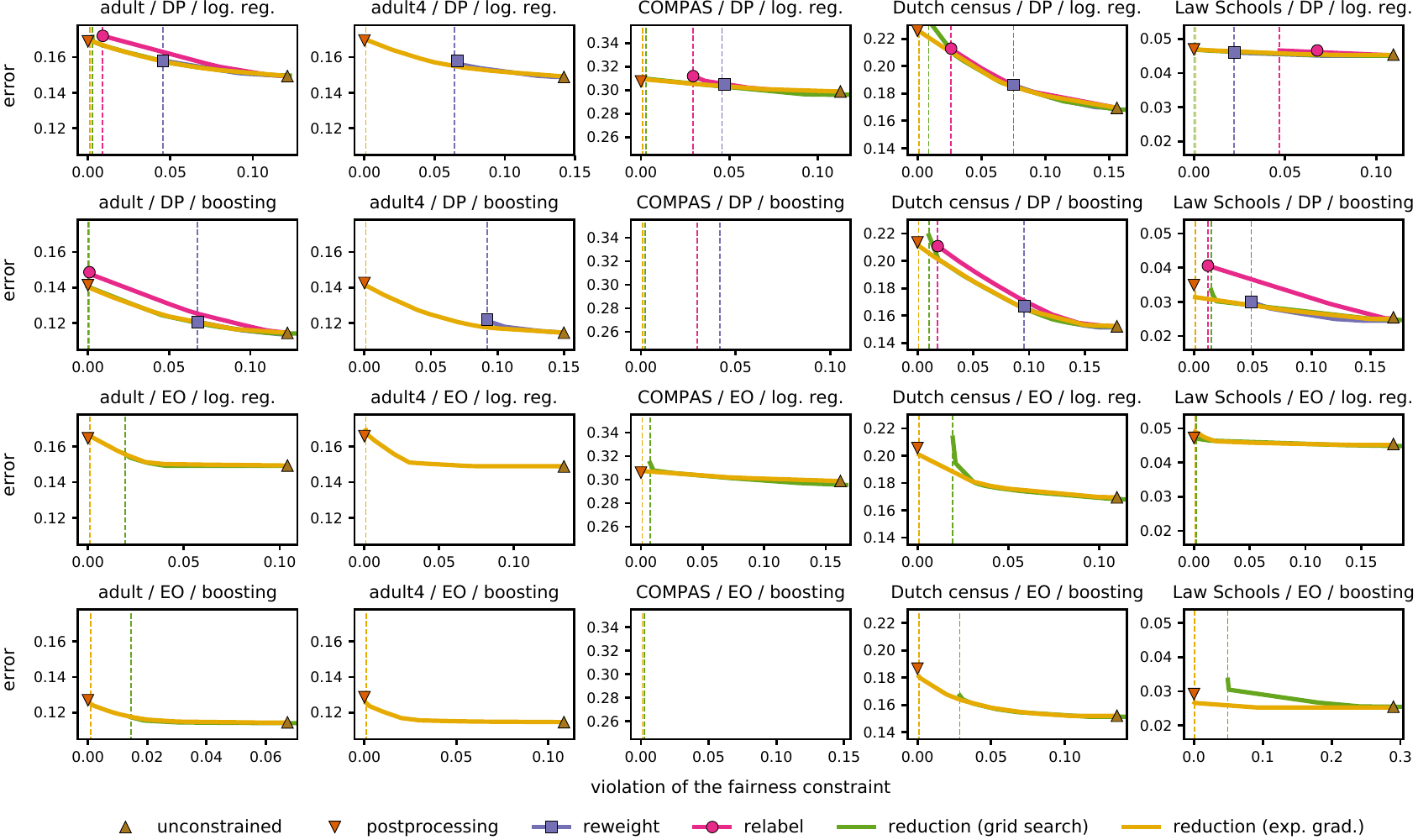}%
\vspace{-6pt}%
\caption{\emph{Training classification error versus constraint violation, with respect to DP (top two rows) and EO (bottom two rows)}. Markers
correspond to the baselines. For our two reductions and the interpolants between reweighting (or relabeling) and the unconstrained classifier,
we varied their tradeoff parameters and plot the Pareto frontiers of the sets of classifiers obtained for each method.
Because the curves of the different methods
often overlap, we use vertical dashed lines to indicate the lowest constraint violations. All data sets have binary protected attributes except for \textit{adult4}, which
has four protected attribute values, so relabeling is not applicable and grid search is not feasible for this data set.
The exponentiated-gradient reduction dominates or matches other approaches as expected since it solves exactly for the points on the Pareto frontier of the set of all classifiers
in each considered class.\\
\looseness=-1}
\label{fig:exp:all:train}%
\vspace{-6pt}%
\end{figure*}

\begin{figure*}
\includegraphics{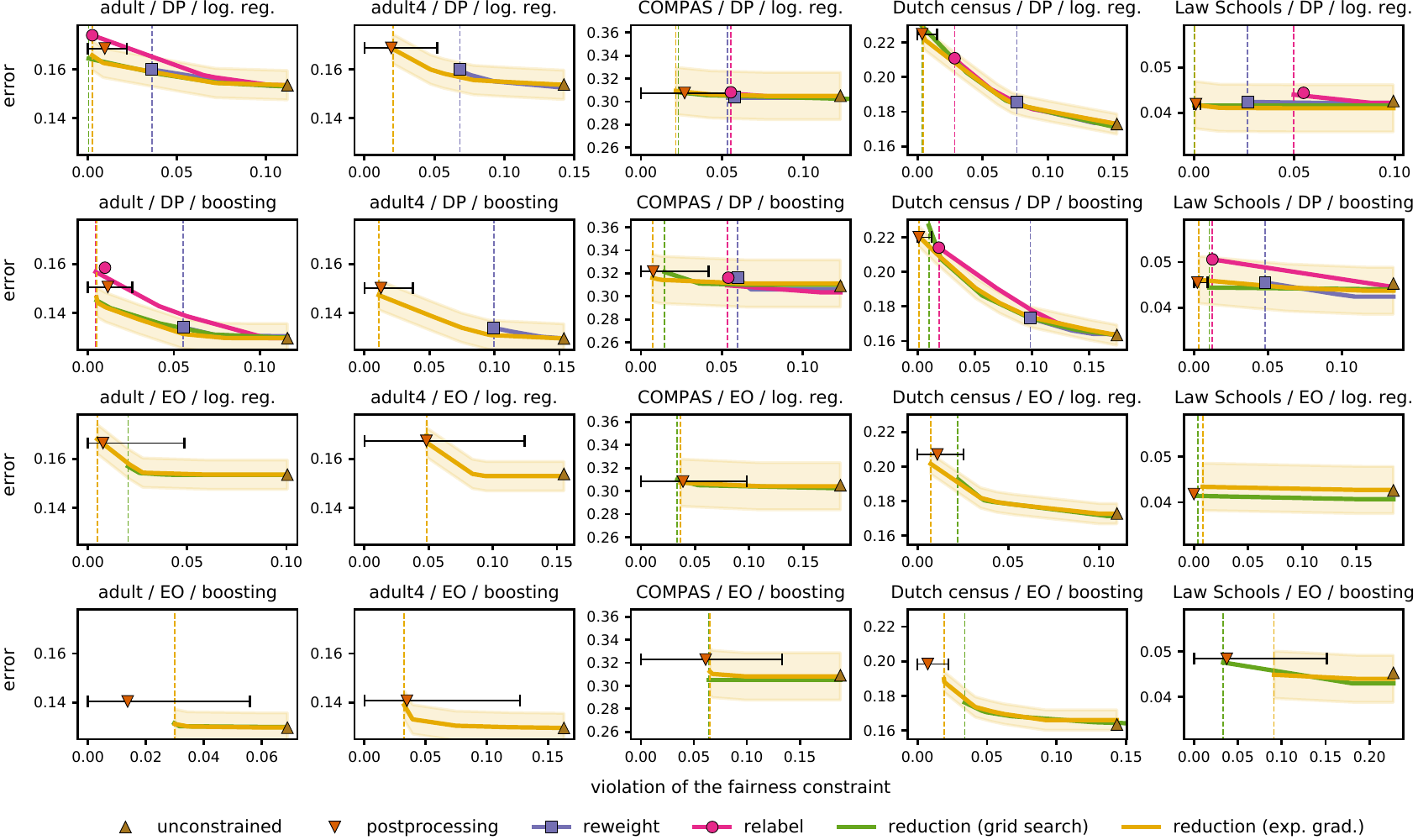}%
\vspace{-6pt}%
\caption{\emph{Test classification error versus constraint violation, with respect to DP (top two rows) and EO (bottom two rows)}. Markers
correspond to the baselines. For our two reductions and the interpolants between reweighting (or relabeling) and the unconstrained classifier,
we show convex envelopes of the classifiers taken from the \emph{training} Pareto frontier of each method (i.e., the same classifiers as shown in \Fig{exp:all:train}).
Because the curves of the different methods
often overlap, we use vertical dashed lines to indicate the lowest constraint violations. All data sets have binary protected attributes except for \textit{adult4}, which
has four protected attribute values, so relabeling is not applicable and grid search is not feasible for this data set.
We show 95\% confidence bands for the classification error of the exponentiated-gradient reduction and 95\% confidence intervals for the constraint violation
of post-processing.
The exponentiated-gradient reduction dominates or matches performance of all other methods up to statistical uncertainty.\looseness=-1}
\label{fig:exp:all:test}%
\vspace{-6pt}%
\end{figure*}

\section{Additional Experimental Results}
\label{app:experiments}

In this appendix we present more complete experimental results. We
present experimental results for both the training and test data.
We evaluate the exponentiated-gradient as well as the grid-search variants
of our reductions. And, finally, we consider extensions of reweighting
and relabeling beyond the specific tradeoffs proposed by~\citet{kamiran12}.
Specifically, we introduce a scaling parameter that interpolates between the prescribed tradeoff
(specific importance weights or the number of examples to relabel) and the unconstrained classifier
(uniform weights or zero examples to relabel). The training data results
are shown in \Fig{exp:all:train}. The test set results are shown in \Fig{exp:all:test}.

\end{document}